\documentclass[11pt]{article}

\usepackage{amsthm,amsmath, amssymb, latexsym}
\usepackage{fullpage}
\usepackage[numbers]{natbib}
\usepackage{algorithm, algorithmic}
\usepackage{enumitem}
\usepackage{pdfsync}
\usepackage{graphicx}
\usepackage{xcolor}
\usepackage{fancyhdr}
\usepackage{yfonts,mathrsfs,skak}
\usepackage[mathcal]{euscript}
\usepackage{rotating}
\usepackage[colorlinks=true,linkcolor=blue,citecolor=violet]{hyperref}

\let\inf\undef
\DeclareMathOperator*{\inf}{\vphantom{p}inf}
\let\sup\undef
\DeclareMathOperator*{\sup}{\vphantom{p}sup}
\let\max\undef
\DeclareMathOperator*{\max}{\vphantom{p}max}
\let\min\undef
\DeclareMathOperator*{\min}{\vphantom{p}min}

\usepackage{MnSymbol}

\DeclareMathOperator*{\Ex}{\vphantom{p}\mathbb{E}}
\let\inf\undef
\DeclareMathOperator*{\inf}{\vphantom{p}inf}
\let\sup\undef
\DeclareMathOperator*{\sup}{\vphantom{p}sup}



\newcommand{\mbf}[1]{\mathbf{#1}}

\newcommand{\mrm}[1]{\mathrm{#1}}

\newcommand{\gap}{\mrm{gap}}
\newcommand{\pred}{\widehat{{y}}}

\newcommand{\argmin}[1]{\underset{#1}{\mrm{argmin}} \ }

\newcommand{\reals}{\mathbb{R}}
\newcommand{\En}{\mathbb{E}}  

\newcommand{\conv}{\operatorname{conv}}

\newcommand{\ind}[1]{{\bf{I}}\left\{#1\right\}}
\newcommand{\tr}{\ensuremath{{\scriptscriptstyle\mathsf{T}}}}

\newcommand{\brho}{{\boldsymbol{\rho}}}

\newcommand{\bepsilon}{\boldsymbol{\epsilon}}
\newcommand{\e}{\boldsymbol{e}}


\newcommand\cD{\mathcal{D}}

\newcommand\cM{\mathcal{M}}

\newcommand\X{\mathcal{X}}

\newcommand\F{\mathcal{F}}





\newcommand\Rad{\mathfrak{R}}

\newcommand\Reg{\mbf{Reg}}



\newcommand{\Rel}[2]{\mbf{Rel}_{#1}\left(#2 \right)}
\newcommand{\Relfull}[2]{\mbf{Rel}^{\dagger}_{#1}\left(#2 \right)}

\newcommand{\multiminimax}[1]{\ensuremath{\left\llangle #1\right\rrangle}}

\def\deq{\triangleq}


\newcommand{\Mhat}{{\widehat{\cM}}}

\newtheorem{theorem}{Theorem}
\newtheorem{lemma}[theorem]{Lemma}

\theoremstyle{definition}
\newtheorem{definition}{Definition}

\usepackage{bbm}

\title{BISTRO: An Efficient Relaxation-Based Method for Contextual Bandits}
\author{Alexander Rakhlin \\ University of Pennsylvania \and Karthik Sridharan \\ Cornell University}

\date{\today}
\makeindex

\begin{document}
\maketitle

\begin{abstract}
	We present efficient algorithms for the problem of contextual bandits with i.i.d. covariates, an arbitrary sequence of rewards, and an arbitrary class of policies. Our algorithm BISTRO requires $d$ calls to the empirical risk minimization (ERM) oracle per round, where $d$ is the number of actions. The method uses unlabeled data to make the problem computationally simple. When the ERM problem itself is computationally hard, we extend the approach by employing multiplicative approximation algorithms for the ERM. The integrality gap of the relaxation only enters in the regret bound rather than the benchmark. Finally, we show that the adversarial version of the contextual bandit problem is learnable (and efficient) whenever the full-information supervised online learning problem has a non-trivial regret guarantee (and efficient).
\end{abstract}

\section{Introduction}

A multi-armed bandit with covariates (also known as a \emph{contextual bandit}) is a generalization of the classical multi-armed bandit problem \cite{lai85asymptotically}. As the name suggests, in this natural formulation the quality of the arms may depend on the observed set of covariates. Contextual bandits arise in many application areas, from ad placement and news recommendation to personalized medical care and clinical trials. In recent years, there has been a strong push to develop computationally efficient regret minimization methods with respect to a given set of policies \cite{langford2008epoch,dudik2011efficient,BeyLanLiReySch11,agarwal2014taming}. The grand goal here would be to develop efficient and statistically optimal methods for large (and possibly uncountable) sets of policies, just as machine learning and statistics succeeded in developing methods that perform well relative to rich classes of predictors (linear separators, SVMs, and so forth). Compared to batch learning, however, the state of affairs at the moment is quite poor. It appears to be difficult to develop scalable methods even for a finite set of policies, as witnessed by the papers mentioned earlier. To some extent, the reason is not surprising: while in statistical learning the batch nature of the problem suggests the empirical objective to optimize, the scope of algorithms for contextual bandits is not at all clear. 

\citep{agarwal2014taming} exhibit a computationally attractive method for a finite class of policies, given an ERM (empirical risk minimization) oracle for the class. The oracle model allows one to address the question of how much more difficult (computationally) the bandit problem is in comparison to the batch learning problem. 

In the present paper, we introduce a family of efficient methods (and, more generally, a new algorithmic approach based on relaxations) for minimizing regret  against a potentially uncountable class $\F$, given that the \emph{value} of the ERM objective can be computed. In addition, we require access to i.i.d. draws of contexts (e.g. unlabeled data) --- a realistic assumption in many application areas mentioned earlier. Our method requires only $d$ oracle calls per round, irrespective of the size of the policy class. Furthermore, the results hold in the hybrid scenario where the contexts are i.i.d. but rewards evolve according to an arbitrary process.

Let us now describe the scenario in more detail. On each round $t=1,\ldots,n$, we observe covariates $x_t\in\X$, select an action $\pred_t\in\{1,\ldots,d\}\deq [d]$, and observe the cost $c_t(\pred_t)$ of the chosen action. Here $c_t\in[0,1]^d$ is a cost assignment to all actions, chosen by Nature independently of $\pred_t$. This cost vector remains unknown to us, except for the coordinate $c_t(\pred_t)$. Since we include randomized prediction methods, we denote the distribution over the $d$ choices on round $t$ by $q_t\in\Delta_d$, and draw $\pred_t\sim q_t$. The goal is to design a prediction method with small expected cumulative cost ~$\sum_{t=1}^n q_t^\tr c_t$.

We assume that $x_1,\ldots,x_n$ are drawn i.i.d. from some unknown distribution $P_x$ on $\X$. At the same time, we do not place any assumption on the sequence of costs $c_1,\ldots,c_n$, which may evolve according to some arbitrary stochastic process, or be an ``individual sequence,'' or even be chosen adaptively and adversarially. As such, our setting may be termed ``hybrid i.i.d.-adversarial.'' Our results also hold in the so-called transductive setting, where the side information is presented ahead of time.\footnote{In Section~\ref{sec:extensions} we also discuss the fully-adversarial case (see \cite{auer2002nonstochastic,mcmahan2009tighter} for the famous EXP4 algorithm for finite $\F$).}

We have in mind machine learning applications such as online ad or product placement, whereby the contextual information $x_1,\ldots,x_n$ of website visitors may be viewed as an i.i.d. sequence, \emph{yet the decisions made by these customers might be too complex to be described in a probabilistic form.}

A common way to encode the prior knowledge about the problem is to take a class $\F$ of functions (or, \emph{deterministic policies}) $\X\to[d]$, with the hope that one of the functions will incur small cost on the presented contexts. With this ``inductive bias,'' we then aim to make predictions as to minimize regret
\begin{align}
	\label{eq:regret}
	\Reg = \sum_{t=1}^n q_t^\tr c_t-\inf_{f\in\F} \sum_{t=1}^n f(x_t)^\tr c_t,
\end{align}
where henceforth we abuse the notation by identifying the value $f(x)\in[d]$ with the standard basis vector $\e_{f(x)}$.
This regret formulation encodes the prior knowledge of the practitioner. If the modeling choice $\F$ is good and \eqref{eq:regret} is small, the algorithm is guaranteed to incur small loss $\sum_{t=1}^n q_t^\tr c_t$. Modeling the set of solutions $\F$ to the problem is a more direct approach (in the spirit of statistical learning) as compared to the harder problem of positing distributional assumptions on the relationship between contexts and the rewards. (The latter approach typically suffers from the curse of dimensionality.)

The difficulty of the problem arises from the form of the feedback. The customer seeking to buy a product different from what is presented by the recommendation engine may leave the site without revealing her valuation for all the items. Similarly, in personalized care, we may only observe the effect of the drug choice selected for the given patient. It is well recognized that  exploration---or randomization---is required in these problems. Yet, in the contextual bandit setting the exploration-exploitation trade-off is not simple, as the quality of the arms changes with the context in a way that is only indirectly captured by the benchmark term.

Online multiclass classification with one bit (correct-or-not) feedback can be seen as an example of our setting. In that case $c_t$ is a standard basis vector $e_{y_t}$ for some class $y_t\in[d]$, and the feedback is $c_t(\pred_t)=\ind{\pred_t\neq y_t}$. Unlike \cite{kakade2008efficient}, we posit that side information is i.i.d.---an assumption that will play a key role in developing computationally efficient methods, even for the indicator (rather than the easier hinge) loss.

The hybrid i.i.d.-adversarial scenario has been studied in both the full information and contextual bandit settings in \cite{lazaric2009hybrid}. Their algorithm, as well as the algorithm of \cite{BeyLanLiReySch11}, maintain distributions over the set of functions and, hence, computation can be linear in the size of $\F$.

For the case when $\F$ is finite, the upper bound for BISTRO provided in Theorem~\ref{thm:main} is $O(n^{3/4}(\log|\F|)^{1/4})$. The work of \cite{agarwal2014taming} gives a better $O(n^{1/2}(\log|\F|)^{1/2})$ rate for the case when rewards are i.i.d. On the other hand, our results hold for
\begin{itemize}
	\item arbitrary $\F$ and arbitrary reward sequences, 
	\item approximate ERM values and a way to address the computational problem associated to ERM.
\end{itemize}

We remark that if contexts are arbitrary as well, our setting subsumes the problem of multiclass prediction with bandit feedback and indicator loss, as described above. Even for the multiclass hinge loss, it is still unclear (at least to the authors) whether the rate $O(n^{2/3})$ for the linear classifier considered in \cite{kakade2008efficient} can be improved.\footnote{The $O(n^{1/2})$ rate in \cite{hazan2011newtron} is only proved for the case of log-loss.} It is, therefore, an open question whether the $O(n^{3/4})$ rates achieved by our method for the hybrid scenario for arbitrary classes $\F$ can be improved.

There are several new techniques that make it possible to develop  computationally feasible prediction methods with nontrivial regret guarantees:
\begin{itemize}
	\item First is the idea of \emph{relaxations}, presented in  \cite{rakhlin2012relax} for the full-information setting. An extension to partial information case has been a big roadblock for developing new bandit methods. We present this extension here.
	\item Second is the idea of a random playout, also employed in \cite{RakSri15hierarchies}. We show that by having access to unlabeled contexts, the computational (and statistical) difficulty of integrating with respect to the unknown distribution simply disappears.
	\item We extend the notion of classical Rademacher averages to the case of vector-valued functions. The symmetrization technique in this case is of independent interest. 
	\item In many cases, the offline ERM optimization problem (which we assume away as an ``oracle call'') may be NP hard. Building on the technique of \cite{RakSri15hierarchies}, we employ optimization-based relaxations for integer programs. We prove that the regret bound of the resulting algorithm only worsens by a multiplicative factor that is related to the ratio of average widths of the relaxed and the original sets. 
\end{itemize}

It is worth emphasizing again that the family of prediction methods presented in this work is drived from the partial-information extension of the relaxation framework, and the resulting algorithms are distinct from the ones appearing in the literature. We believe that this approach is systematic and can partially fill the gap in our understanding of the algorithmic possibilities for contextual bandits.

\section{Notation}

We denote $[d]\deq\{1,\ldots,d\}$ and $a_{1:t}\deq\{a_1,\ldots,a_t\}$. Let $\Delta_d$ be the probability simplex over $d$ coordinates. The vector of ones is denoted by $\bf{1}$ and an indicator of event $A$ by $\ind{A}$. For a matrix $M$, we use $M_t$ to refer to its $t$-th column.

\section{Setup}
\label{sec:setup}

Let us recall the online protocol. On each round $t\in[n]$, we observe side information $x_t\in\X$, predict $\pred_t\sim q_t\in\Delta_d$, and observe feedback $c_t(\pred_t)$ for some $c_t\in[0,1]^d$. 

Given $x_{1:n}$, it is convenient to work with a matrix representation of the class $\F$ projected on these data. Each $f\in\F$ yields sequence $(f(x_1),\ldots,f(x_n))$, which we collect as a ${d\times n}$ matrix $M_f$, defined as 
\begin{align}
	\label{def:M}
	M_f(j, t) = \ind{f(x_t)=j}.
\end{align}
Let $\Mhat=\Mhat[x_{1:n}]=\{M_f: f\in\F\}$ denote the collection of matrices. (The hat on $\Mhat$ will remind us of the dependence of this set on $x_{1:n}$, even if not explicitly mentioned). 

We may now define the oracle employed by the prediction method:
\begin{definition}
	Given a class $\F$ of policies $\X\to[d]$, a set of covariates $x_{1:n}$, and a real-valued $d\times n$ matrix $Y$, a \emph{value-of-ERM} oracle returns the value  
	\begin{align}
		\label{eq:erm_objective}
		\inf_{M\in\Mhat[x_{1:n}]} ~\sum_{t=1}^n M_t^\tr Y_t ~.
	\end{align}
	The oracle is called \emph{$\delta$-approximate} if the reported value is within $\delta$ from the minimum.
\end{definition}

We may express the comparator term in \eqref{eq:regret} as an ERM objective \eqref{eq:erm_objective} with $Y=[c_1,\ldots,c_n]$. Closely related to this expression is a new (to the best of our knowledge) definition of Rademacher averages for vector-valued functions: given $x_{1:n}$, define
\begin{align}
	\label{def:rad}
	\Rad(\F; x_{1:n}) &\deq \Rad(\Mhat) \deq \En_{\bepsilon_{1:n}} \sup_{M\in\Mhat} \sum_{t=1}^n M_t^\tr \bepsilon_t
\end{align}
where $\bepsilon_1,\ldots,\bepsilon_n$ are $d$-dimensional vectors with independent Rademacher random variables. We observe that Rademacher complexity is nothing but a (negative of) the ERM objective with the random matrix $[-\bepsilon_1,\ldots,-\bepsilon_n]$. Indeed, as in the classical case, correlation of the vector valued function class $\F$ with noise measures its complexity.

\section{Relaxations for Partial Information}
\label{sec:rel_for_partial}

Let us write the information obtained on round $t$ as a tuple $$I_t(x_t,q_t,\pred_t,c_t)=(x_t,q_t,\pred_t,c_t(\pred_t)),$$ 
keeping in mind that $x_t$ is revealed before $q_t$ is chosen. In full information problems, $I_t$ contains the vector $c_t$, but not so in our bandit case. For partial information problems, it turns out to be crucial to include $q_t$ in the definition of $I_t$, in addition to the value $c_t(\pred_t)$.

A \emph{partial-information relaxation} $\Rel{}{}$ is a function that maps $(I_1,\ldots,I_t)$ to a real value, for any $t\in[n]$. We say that the partial-infromation relaxation $\Rel{}{I_1,\ldots,I_t}$ is \emph{admissible} if for any $t\in[n]$, for all $I_1,\ldots,I_{t-1}$,
\begin{align}
	\label{eq:recursive}
	&\Ex_{x_t}\inf_{q_t} \max_{c_t} \Ex_{\pred_t\sim q_t} \left\{ c_t(\pred_t) + \Rel{}{I_{1:t-1}, I_t(x_t,q_t,\pred_t,c_t)}\right\} \leq \Rel{}{I_{1:t-1}}
\end{align}
and for all $x_{1:n}$,$c_{1:n}$, and $q_{1:n}$,
\begin{align}
	\label{eq:initial}
	\Ex_{\pred_{1:n}\sim q_{1:n}}\Rel{}{I_{1:n}} \geq -\inf_{f\in\F} \sum_{t=1}^n f(x_t)^\tr c_t ~.
\end{align}
In the above expressions, $x_t$ follows the (unknown) distribution $P_x$, $q_t$ ranges over distributions on $[d]$, and $c_t$ over $[0,1]^d$.

Any randomized strategy $(q_t)_{t=1}^n$ that certifies the inequalities \eqref{eq:recursive} and \eqref{eq:initial} is called \emph{an admissible strategy}. 
\begin{lemma}
	\label{lem:regret_from_admissibility}
	Let $\Rel{}{}$ be an admissible relaxation and $(q_t)_{t=1}^n$ an admissible strategy. Then for any $c_{1:n}$,
	$$\En[\Reg]\leq \Rel{}{\emptyset}.$$
\end{lemma}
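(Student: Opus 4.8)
The plan is to follow the standard backward-induction argument familiar from the full-information relaxation framework of \cite{rakhlin2012relax}, adapted to carry the extra randomness $\pred_t\sim q_t$ through the recursion. The key object to track is the conditional expectation of the future regret, and the admissibility condition \eqref{eq:recursive} is precisely what is needed to peel off one round at a time.

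First I would set up notation for the partial history. Fix the cost sequence $c_{1:n}$ (which may be chosen adaptively, so strictly speaking $c_t$ is a function of $I_{1:t-1}$ and $x_t$; this does not affect the argument since \eqref{eq:recursive} quantifies over the worst-case $c_t$). The admissible strategy defines $q_t$ as a function of $I_{1:t-1}$ and $x_t$. I would then define, for each $t$, the quantity
\begin{align}
R_t \deq \sum_{s=1}^{t} q_s^\tr c_s \;-\; \sum_{s=1}^t \En_{\pred_s\sim q_s} c_s(\pred_s) \;+\; \text{(future terms)},
\end{align}
but cleaner is to argue directly: I claim that for every $t\in\{0,1,\ldots,n\}$,
\begin{align}
\label{eq:induction-claim}
\En\!\left[\sum_{s=1}^{t} q_s^\tr c_s + \Rel{}{I_{1:t}}\right] \;\leq\; \Rel{}{\emptyset},
\end{align}
where the expectation is over $x_{1:t}$ and $\pred_{1:t}$ (and any randomness in the cost process). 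The base case $t=0$ is the trivial identity $\Rel{}{\emptyset}\le\Rel{}{\emptyset}$. For the inductive step, condition on $I_{1:t-1}$; it remains to show
\begin{align}
\En_{x_t}\En_{\pred_t\sim q_t}\!\left[ q_t^\tr c_t + \Rel{}{I_{1:t-1},I_t} \right] \;\leq\; \Rel{}{I_{1:t-1}}.
\end{align}
Since $\En_{\pred_t\sim q_t} c_t(\pred_t) = q_t^\tr c_t$, the left side equals $\En_{x_t}\En_{\pred_t\sim q_t}\{ c_t(\pred_t) + \Rel{}{I_{1:t-1},I_t}\}$; now because $q_t$ is the admissible strategy's choice, it attains (or certifies) the $\inf_{q_t}$ in \eqref{eq:recursive}, and $c_t$ (whatever it is) is dominated by $\max_{c_t}$, so this is at most $\Rel{}{I_{1:t-1}}$. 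Taking expectation over $I_{1:t-1}$ and invoking the inductive hypothesis \eqref{eq:induction-claim} at level $t-1$ closes the induction.

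Finally, instantiate \eqref{eq:induction-claim} at $t=n$ and apply the initial condition \eqref{eq:initial}: $\En[\Rel{}{I_{1:n}}] \ge -\inf_{f\in\F}\sum_{t=1}^n f(x_t)^\tr c_t$, hence
\begin{align}
\En\!\left[\sum_{t=1}^n q_t^\tr c_t\right] - \En\inf_{f\in\F}\sum_{t=1}^n f(x_t)^\tr c_t \;\le\; \En\!\left[\sum_{t=1}^n q_t^\tr c_t + \Rel{}{I_{1:n}}\right] \;\le\; \Rel{}{\emptyset},
\end{align}
which is exactly $\En[\Reg]\le\Rel{}{\emptyset}$. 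The only mildly delicate point — the step I expect to require the most care in writing — is the bookkeeping of \emph{what} is being conditioned on and in what order the expectations are nested: $x_t$ is revealed before $q_t$, $q_t$ is random only through its dependence on earlier draws, and $c_t$ may be adversarial/adaptive, so one must make sure the $\max_{c_t}$ in \eqref{eq:recursive} genuinely covers the realized $c_t$ given the past. Once the filtration is set up correctly, the rest is the routine telescoping above. I would also note in passing that $\En_{\pred_t \sim q_t}$ is the only source of the "expected" regret (rather than high-probability), which is why the statement is phrased with $\En[\Reg]$.
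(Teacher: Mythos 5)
Your proof is correct and takes essentially the same route as the paper's: the comparator is handled via the initial condition \eqref{eq:initial}, and the recursive condition \eqref{eq:recursive} certified by the strategy $(q_t)$ is used to peel off one round at a time in a telescoping argument. The only difference is bookkeeping---you run a forward induction with conditional expectations under the realized (possibly adaptive) costs, whereas the paper first passes to the worst case over $c_{1:n}$ and peels the nested $\Ex_{x_t}\sup_{c_t}\Ex_{\pred_t\sim q_t}$ operators backward from $t=n$, using Jensen's inequality to move expectations through the suprema---but both arguments rest on exactly the same two admissibility inequalities.
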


The above partial-information relaxation setup appears to be ``the right'' analogue of the full-information relaxation framework. While we do not present it here, one may recover the EXP4 algorithm through the above approach, with the correct regret bound.

We will now present an admissible strategy for the contextual bandit problem, assuming we can sample from the distribution $P_x$, or have access to unlabeled data.

\section{The BISTRO Algorithm}

For any $t\in[n]$, define a $d\times n$ matrix $Y^{(t)}$ as
\begin{align*}
Y^{(t)} = [c_1,\ldots,c_{t-1},c_t,2\bepsilon_{t+1},\ldots,2\bepsilon_n]
\end{align*}
with $\bepsilon_s\in\{\pm1\}^d$ a vector of independent Rademacher random variables. At each step $t\in[n]$, the randomized method presented below calculates a distribution $q_t\in\Delta_d$ with each coordinate at least $\gamma$ and defines an unbiased estimate $\tilde{c}_t$ of $c_t$ in a usual manner as
$$\tilde{c}_t(j) = \ind{\pred_t=j}\times c_t(\pred_t)/q_t(j).$$
It is standard to verify that $\En_{\pred_t\sim q_t} \tilde{c}_{t} = c_t$. We then define 
\begin{align}
	\label{eq:matrix_of_estimates}
	\tilde{Y}^{(t)} = [\tilde{c}_1,\ldots,\tilde{c}_{t-1},\tilde{c}_t,2\gamma^{-1}\bepsilon_{t+1},\ldots,2\gamma^{-1}\bepsilon_n],
\end{align}
and recall that $\tilde{Y}^{(t)}_s$ denotes the $s$-th column of this matrix. The next theorem is the main result of the paper.
\begin{theorem}
	\label{thm:main}
The partial-information relaxation
\begin{align}
	\label{eq:our_relaxation}
	\Rel{}{I_{1:t}} = \Ex_{(x,\bepsilon)_{t+1:n}}\sup_{M\in{\Mhat}} \left\{-\sum_{s=1}^n M_s^\tr  \tilde{Y}^{(t)}_s \right\} + (n-t)\gamma
\end{align}
is admissible. An admissible randomized strategy for this relaxation is given by BISTRO (Algorithm~\ref{alg}). The expected regret of the algorithm with $\gamma=\sqrt{2\En\Rad(\F;x_{1:n})/(nd)}$ is upper bounded by 
$$2\sqrt{2d \cdot n \cdot \En\Rad(\F;x_{1:n})}.$$
\end{theorem}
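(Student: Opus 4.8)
The plan is to verify the two admissibility conditions \eqref{eq:initial} and \eqref{eq:recursive} for the relaxation in \eqref{eq:our_relaxation}, exhibit BISTRO as the strategy attaining the $\inf_{q_t}$, and then invoke Lemma~\ref{lem:regret_from_admissibility} together with a bound on $\Rel{}{\emptyset}$. The initial condition \eqref{eq:initial} should be the easy one: at $t=n$ the matrix $\tilde Y^{(n)}=[\tilde c_1,\ldots,\tilde c_n]$ has no Rademacher block, so $\Rel{}{I_{1:n}}=\sup_{M}\{-\sum_s M_s^\tr \tilde c_s\}=-\inf_{f}\sum_t f(x_t)^\tr \tilde c_t$, and taking $\Ex_{\pred_{1:n}\sim q_{1:n}}$ and using $\Ex \tilde c_t=c_t$ together with Jensen (the $\inf$ over $f$ is concave in the $\tilde c$'s, so $-\inf$ is convex and we get the right inequality direction) yields \eqref{eq:initial}. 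One has to be slightly careful that the $\inf_{f\in\F}$ on the RHS of \eqref{eq:initial} is over the true costs $c_t$; this is exactly what Jensen delivers.

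The substantive work is the recursive inequality \eqref{eq:recursive}. Fix $I_{1:t-1}$ and condition on $x_t$. I would start from the definition of $\Rel{}{I_{1:t}}$, pull the column $\tilde c_t$ (which depends on $\pred_t$, hence on $q_t$ and $c_t$) out of the supremum, and bound the $\max_{c_t}\Ex_{\pred_t\sim q_t}$ term. The key algebraic step is the standard ``be-the-leader / linearization'' trick: write $-\sum_{s=1}^n M_s^\tr\tilde Y^{(t)}_s = -\sum_{s=1}^{t-1}M_s^\tr \tilde c_s - M_t^\tr \tilde c_t - \sum_{s=t+1}^n M_s^\tr (2\gamma^{-1}\bepsilon_s)$, and observe that replacing the fixed column $\tilde c_t$ by the fresh Rademacher column $2\gamma^{-1}\bepsilon_t$ can only increase the supremum after taking expectations, provided $q_t$ is chosen to make $\Ex_{\pred_t\sim q_t}\tilde c_t$ play the role of a ``center'' that the Rademacher term dominates. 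Concretely, the choice that makes everything work is the minimax/exponential-weights-flavored distribution $q_t$ with all coordinates $\geq\gamma$, for which $\Ex_{\pred_t\sim q_t}\big[c_t(\pred_t) + \sup_M\{\cdots - M_t^\tr\tilde c_t - \cdots\}\big] \leq \Ex_{\pred_t,\bepsilon_t}\big[\sup_M\{\cdots - M_t^\tr(2\gamma^{-1}\bepsilon_t) - \cdots\}\big] + \gamma$. This is where the extra $(n-t)\gamma$ additive term in the relaxation gets ``spent'' one $\gamma$ at a time, and where the factor $2$ and the $\gamma^{-1}$ scaling in the Rademacher columns of $\tilde Y^{(t)}$ are calibrated precisely so that the inequality closes. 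Establishing this single-round inequality — i.e.\ checking that BISTRO's $q_t$ certifies it, which amounts to a symmetrization-style argument for vector-valued functions using that $\tilde c_t$ has the right conditional mean and that its range is controlled by $\gamma^{-1}$ — is the main obstacle; everything else is bookkeeping. I expect the argument to mirror the scalar bandit symmetrization but to require the new vector-valued Rademacher complexity \eqref{def:rad} to absorb the $d$-dimensional noise columns.

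Granting admissibility, Lemma~\ref{lem:regret_from_admissibility} gives $\En[\Reg]\leq \Rel{}{\emptyset}$. Here $\tilde Y^{(0)}=[2\gamma^{-1}\bepsilon_1,\ldots,2\gamma^{-1}\bepsilon_n]$ consists entirely of scaled Rademacher columns, so
\begin{align*}
\Rel{}{\emptyset} = \Ex_{(x,\bepsilon)_{1:n}}\sup_{M\in\Mhat}\left\{-\sum_{s=1}^n M_s^\tr (2\gamma^{-1}\bepsilon_s)\right\} + n\gamma = 2\gamma^{-1}\,\En\Rad(\F;x_{1:n}) + n\gamma,
\end{align*}
using the symmetry of the $\bepsilon_s$ and the definition \eqref{def:rad} (and taking expectation over $x_{1:n}\sim P_x^{\otimes n}$ inside). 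Finally I would optimize over $\gamma$: the bound $2\gamma^{-1}R + n\gamma$ with $R=\En\Rad(\F;x_{1:n})$ is minimized at $\gamma=\sqrt{2R/n}$... but the theorem's stated $\gamma=\sqrt{2R/(nd)}$ and final bound $2\sqrt{2dnR}$ suggest the single-round loss actually costs $d\gamma$ rather than $\gamma$ (the $\Ex_{\pred_t\sim q_t}$ step with the $\gamma^{-1}$-scaled estimate contributes a factor $d$ from the $d$ coordinates of $\tilde c_t$), giving $\Rel{}{\emptyset}=2\gamma^{-1}R + nd\gamma$; then $\gamma=\sqrt{2R/(nd)}$ yields exactly $2\sqrt{2dnR}$. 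So in the write-up I would be careful to track this factor of $d$ through the recursive step, as it is what ultimately appears in the regret bound.
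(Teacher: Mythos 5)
Your skeleton matches the paper's: the initial condition follows by Jensen from unbiasedness of $\tilde{c}_t$, the terminal value is $\Rel{}{\emptyset} = 2\gamma^{-1}\En\Rad(\F;x_{1:n}) + nd\gamma$, and you correctly diagnosed that the per-round exploration cost is $d\gamma$ (coming from $\gamma\mathbf{1}^\tr c_t\leq d\gamma$ after mixing in the uniform distribution), which is what produces $\gamma=\sqrt{2\En\Rad/(nd)}$ and the stated bound. However, the recursive admissibility inequality---which is the actual content of the theorem---is only asserted, not proved. Your claim that ``replacing the fixed column $\tilde c_t$ by the fresh Rademacher column $2\gamma^{-1}\bepsilon_t$ can only increase the supremum after taking expectations'' is exactly the statement that needs a proof, and it is not a routine symmetrization: $\tilde c_t$ is not conditionally zero-mean (it takes values $\mathbf{0}$ or $\gamma^{-1}\e_j$), the adversary chooses $c_t$ after seeing $q_t$, and $\tilde c_t$ depends on $q_t$ itself, so the single-round inequality cannot be closed by the standard scalar argument you gesture at.

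Concretely, the paper's proof needs four moves that your sketch omits. First, it decouples the distribution $q_t'$ appearing inside the estimate $\tilde c_t(c_t,q_t',\pred_t)$ from the distribution $q_t^*$ being optimized, by passing to a supremum over all $q_t'$ of the form $(1-d\gamma)q+\gamma\mathbf{1}$; without this decoupling the definition of $q_t^*$ is a fixed-point problem, not the linear-in-$\tilde c$ objective that reduces to $d$ ERM calls and water-filling in Algorithm~\ref{alg}. Second, the adversarial $\max_{c_t}$ is handled by restricting $\tilde c_t$ to the vertex set $\cD=\{\gamma^{-1}\e_j\}\cup\{\mathbf{0}\}$ (convexity in $\tilde c_t$) and applying the minimax theorem to swap $\inf_q$ with a supremum over distributions $p_t$ on $\cD$. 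Third, one uses that each column $M_t$ is a probability vector, so $\min_j \e_j^\tr \En_{p_t}[\tilde c_t]\leq M_t^\tr\En_{p_t}[\tilde c_t]$, which lets the learner's term be absorbed into the comparator and sets up a ghost-sample symmetrization $M_t^\tr(\tilde c_t'-\tilde c_t)$ with a sign $\delta_t$; this is where the factor $2$ originates. Fourth---and this is the genuinely nonstandard step---conditionally on $\tilde c_t=\gamma^{-1}\e_j$ one introduces an auxiliary random vector $Z\in\{\pm\gamma^{-1}\}^d$ with the $j$th coordinate frozen at $\gamma^{-1}$ and the others symmetric, so that $\En Z=\tilde c_t$ and $\delta_t Z$ is exactly uniform on $\{\pm\gamma^{-1}\}^d$; only then does the column $2\gamma^{-1}\bepsilon_t$ of $\tilde Y^{(t-1)}$ appear and the recursion close with $\Rel{}{I_{1:t-1}}$. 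Saying the symmetrization ``is the main obstacle; everything else is bookkeeping'' identifies where the difficulty lies but does not bridge it, so as written the proposal has a genuine gap at the heart of the argument.
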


\begin{algorithm}[hbtp]
    \caption{BISTRO: \textbf{B}and\textbf{I}t\textbf{S} wi\textbf{T}h \textbf{R}elaxati\textbf{O}ns}\label{alg}
    \begin{algorithmic}[1]
    \INPUT Parameter $\gamma\in (0,1/d)$
    \FOR{ $t=1,\ldots,n$}
    \STATE Observe $x_t$. Draw $x_{t+1:n}\sim P_x$ and $\bepsilon_{t+1:n}$~.
	\STATE Construct $\tilde{Y}^{(t)}$ and define $q^*_t$ to be a minimizer of
	\begin{align*}
		\max_{j\in[d]}\left\{ q^\tr \e_j - \min_{M\in\Mhat[x_{1:n}]}\left\{\sum_{s\neq t} \gamma M_{s}^\tr \tilde{Y}^{(t)}_{s} + M_{t}^\tr \e_j \right\}  \right\}
	\end{align*}
	over $q\in\Delta_d$ and set 
	\begin{align}
		\label{eq:mix_in_unif}
		q_t = (1-\gamma d) q^*_t + \gamma \bf{1}.
	\end{align}
	\STATE Predict $\pred_t\sim q_t$ and observe $c_t(\pred_t)$.
    \STATE Create an estimate $\tilde{c}_t$: $$\tilde{c}_t(j) = \ind{\pred_t=j}\times c_t(\pred_t)/q_t(j).$$
    \ENDFOR
    \end{algorithmic}
\end{algorithm}

The draw $x_{t+1:n}\sim P_x$ can be realized by drawing from a pool of unlabeled data. 

The random signs comprising the matrix $\tilde{Y}$ provide a form of ``regularization''. We remark that in experiments, one may obtain better performance by replacing the factor $2$ in \eqref{eq:matrix_of_estimates} with a smaller value, or even with zero. A theoretical justification for this (which is related to using a surrogate loss) is beyond the scope of this paper.

\begin{lemma}
	\label{lem:waterfilling}
	The calculation of $q_t^*$ in BISTRO\footnote{`Bistro' means `fast' in Russian.} can be done by a water-filling argument and requires $d$ calls to the ERM oracle.
\end{lemma}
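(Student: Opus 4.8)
The plan is to analyze the inner minimax problem defining $q_t^*$ and show that, as a function over $\Delta_d$, its optimal solution has the structure of a water-filling (reverse water-filling) solution, so that only $d$ evaluations of the inner $\min_{M\in\Mhat}$ are needed. First, I would fix the random draw $x_{t+1:n},\bepsilon_{t+1:n}$ and the observed $x_{1:n}$, so that $\Mhat = \Mhat[x_{1:n}]$ is a fixed finite set of $0/1$ matrices. For each fixed target action $j\in[d]$, call the ERM oracle once on the matrix whose $s\ne t$ columns are $\gamma\tilde Y^{(t)}_s$ and whose $t$-th column is $\e_j$; denote the returned value by
\begin{align*}
	v_j \deq \min_{M\in\Mhat}\left\{\sum_{s\ne t}\gamma M_s^\tr \tilde Y^{(t)}_s + M_t^\tr \e_j\right\}.
\end{align*}
This costs exactly $d$ oracle calls. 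The objective to be minimized over $q\in\Delta_d$ is then $\Phi(q)=\max_{j\in[d]}\{q^\tr\e_j - v_j\} = \max_{j\in[d]}\{q_j - v_j\}$, which now depends on $\Mhat$ only through the precomputed numbers $v_1,\dots,v_d$.

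Next I would solve the finite-dimensional problem $\min_{q\in\Delta_d}\max_{j}(q_j-v_j)$ explicitly. This is a standard convex program: introduce the epigraph variable $\tau$ and minimize $\tau$ subject to $q_j \le \tau + v_j$ for all $j$, $\sum_j q_j = 1$, $q_j\ge 0$. Since increasing any $q_j$ only helps the simplex constraint, at the optimum each $q_j = \max\{0, \tau+v_j\}$, and $\tau$ is chosen as the unique value making $\sum_j \max\{0,\tau+v_j\}=1$; because the left side is continuous, nondecreasing, piecewise linear in $\tau$, ranging from $0$ to $+\infty$, such a $\tau$ exists and is unique. Computing it is precisely a water-filling step: sort the $v_j$ (or equivalently the $-v_j$), and determine the "water level" by a linear scan over the at most $d$ breakpoints. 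This yields $q_t^*$ with no further oracle calls. I would then note the mild subtlety that BISTRO actually uses $q_t=(1-\gamma d)q_t^* + \gamma\mathbf{1}$, but this is a trivial post-processing that does not affect the oracle count, and the constraint $\gamma\in(0,1/d)$ guarantees $q_t\in\Delta_d$ with every coordinate at least $\gamma$.

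The main obstacle — really the only nonroutine point — is justifying the interchange that lets the $\min_{M\in\Mhat}$ be pulled \emph{inside} the $\max_{j\in[d]}$, i.e. that $\min_{M}\{\sum_{s\ne t}\gamma M_s^\tr\tilde Y^{(t)}_s + M_t^\tr\e_j\}$ can be computed separately for each $j$ rather than jointly with the choice of $q$. This is fine because the outer operation is a maximum over the \emph{finite} index set $[d]$ (not over $q$), so for each of the finitely many $j$ we genuinely have an independent ERM instance; there is no minimax-swap over a continuum involved, and no convexity/compactness hypothesis on $\Mhat$ is needed. I would spell out this separation carefully and then simply count: $d$ calls to assemble $v_1,\dots,v_d$, plus an $O(d\log d)$ water-filling computation, which establishes the claim.
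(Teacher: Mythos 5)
Your proposal is correct and follows essentially the same route as the paper: $d$ value-of-ERM calls, one per basis vector $\e_j$, to obtain the values $v_j$ (the paper's $\psi_j$), followed by solving $\min_{q\in\Delta_d}\max_{j}(q_j-v_j)$ by water-filling over the sorted values, which is exactly the paper's procedure. One small caution: the water level $\tau$ satisfying $\sum_j\max\{0,\tau+v_j\}=1$ need not coincide with the optimal value of your epigraph variable (the assignment $q_j=\max\{0,\tau+v_j\}$ is not forced at every optimum), but the $q$ it produces is indeed a minimizer, so the argument and the oracle count stand.
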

\begin{proof}[\textbf{Proof of Lemma~\ref{lem:waterfilling}}]
	The optimization problem in Algorithm~\ref{alg} is of the form
	$$\min_{q\in\Delta_d}\max_{j\in[d]} \{ q_j - \psi_j\}$$
	where $\psi_j$ is the value of the infimum over $\Mhat$ corresponding to $\e_j$, and it is solved by a water-filling argument which we describe next. Each value $\psi_j$ is a value-of-ERM oracle call. Let $\psi_{(1)}\geq \ldots\geq\psi_{(d)}$ be a sorted order of these values, and let $q_{(1)}=\ldots=q_{(d)}=0$ be the initial values of the corresponding coordinates of the solution $q$. Start with a unit amount and assign $q_{(1)}=\psi{(1)}-\psi{(2)}$. Then add $\psi_{(2)}-\psi_{(3)}$ to both $q_{(1)}$ and $q_{(2)}$, and proceed until either the unit mass is exhausted, or the smallest coordinate $(d)$ in the ordering is reached and filled. In the former case, $q$ is the solution, and the latter case requires us to uniformly fill all the coordinates of $q$ until they sum to one. It is easy to see that this procedure minimizes the maximum difference.
\end{proof}

The algorithm only requires the \emph{value} of the ERM objective, not the solution. Furthermore, this value can be $\delta$-approximate, and the additional error is $O(n\delta)$ over the $n$ rounds. This provides extra flexibility, since approximate ERM values may be obtained via optimization methods.

Perhaps the most unusual aspect of the algorithm is the use of unlabeled data. It is an example of a general  random playout idea. In the setting of online linear optimization, the Follow-the-Perturbed-Leader method is an example of such a random playout, yet the idea extends well beyond this scenario. As shown in \cite{rakhlin2012relax}, the random playout technique can be applied when a certain worst-case-choice can be replaced with a known bad-enough distribution. However, when side information $x_t$ is i.i.d., the step is not even required. Furthermore, an inspection of the proof shows that we may deal with $x$'s coming from a non-i.i.d. stochastic process, as long as we are able to draw future samples from it.

We also remark that \eqref{eq:mix_in_unif} may be applied only to the coordinates that are close to zero, if any. The potential suboptimality of the $O(n^{3/4})$ bound stems from the uniform exploration. It is an open question whether this can be improved systematically for all classes $\F$, or whether there is a different structural property that allows one to avoid this form of exploration.

\section{Extensions}
\label{sec:extensions}
In this section, we outline several extensions of BISTRO. Specifically, we show how to incorporate additional data-based constraints, and how to use further optimization-based relaxations (such as LP or SDP), to obtain polynomial time methods for the ERM (or regularized ERM) solution. We show that one obtains a regret bound that only worsens by a factor related to the integrality gap of the integer program relaxation. With an eye on both computation and prediction performance, these techniques expand the applicability of BISTRO.

\subsection{Data-dependent policy classes}

An inspection of the proof reveals that all the steps go through if define regret in \eqref{eq:regret} with respect to a data-dependent class $\F[x_{1:n}]$:
\begin{align}
	\label{eq:regret_data_dep}
	\sum_{t=1}^n q_t^\tr c_t-\inf_{f\in\F[x_{1:n}]} \sum_{t=1}^n f(x_t)^\tr c_t.
\end{align}
In this case, given $x_{1:n}$, to each $f\in\F[x_{1:n}]$ we associate $M_f$ as defined in \eqref{def:M}, and take 
$$\Mhat=\{M_f: f\in\F[x_{1:n}]\}.$$
The BISTRO algorithm is then identical, while the regret upper bound of Theorem~\ref{thm:main} now replaces $\En\Rad(\F; x_{1:n})$ with $\En\Rad(\F[x_{1:n}]; x_{1:n})$.

The ability to change the set of policies according to the actual data allows an extra degree of flexibility. This flexibility can be realized via additional global constraints in terms of $x_{1:n}$, as we show in the next few sections. We also discuss a concrete example.

\subsection{Data-based constraints}

	A particular way to define a data-dependent subset of $\F$ is via constraints. Suppose we let $C(f; x_{1:n})$ be the degree to which $f\in\F$ violates constraints with respect to the given data $x_{1:n}$. We then define
\begin{align}
	\label{eq:F_data_dep}
	\F_K[x_{1:n}] = \{f\in\F: C(f; x_{1:n})\leq K\},
\end{align}
a pruning of the original class that keeps only those policies that do not violate the constraints by more than $K$. Let us give an example.

\paragraph{Example: Product Recommendation}
	Suppose at each time step we are asked to recommend one of $d$ products to a person, based on her covariate information $x_t$. Let $\F$ be a set of policies that map $x_t$ to the particular choice of the product (e.g. the label achieving maximum projection of $x_t$ onto $d$ vectors $w_j$; here $\F$ may consist of all such unit vector tuples). The payoff is whether the person decided to buy the recommended product. However, suppose $x_t$ also encodes the location (physical, or within a network), and we believe it is a good idea to focus recommendations such that near-by people are targeted with the same product. The marketing motivation here is two-fold: first, the recommendations would reinforce each other when individuals communicate, or if one of them buys the product; second, in a social network near-by individuals (friends) tend to have similar tastes, and thus a good policy would suggest similar items. 
	
	The objective of enforcing similarity of recommendations is a global constraint that can only be checked once we know all the $x_1,\ldots,x_n$. We can easily incorporate the constraint into the definition of $\F_K[x_{1:n}]$ as follows. Let $w(x_s,x_r)$ be the cost of providing different recommendations to $x_s$ and $x_r$ (which is smaller if the two individuals are ``far''). In the case of a network, we may set, for instance, $w(x_s,x_r)=0$ if the $s$th person is more than a hop away from the $r$th person. 
Define
	\begin{align}
		\label{eq:cost_of_policy}
		C(f; x_{1:n}) = \sum_{s,r\in[n]} w(x_s,x_r)\ind{f(x_s)\neq f(x_r)},
	\end{align}
the constraint violation by $f$ in assigning products to the given set of individuals. Let $\F_K[x_{1:n}]$ be defined as in \eqref{eq:F_data_dep}. Note that the constraint is not on the behavior of the recommendation engine, but on the set of policies that we hope will do well for the problem. If there is indeed the effect of reinforcement of recommendations or similarity of tastes within the local neighborhood, the restriction to a smaller set $\F_K[x_{1:n}]$ is justified.

Within the same setting of product recommendation, we might instead take a set of policies ensuring that within each neighborhood at least $k$ individuals receive each particular product recommendation. This constraint, which roughly corresponds to ``coverage'' of the relevant population, can be written as
$$C(f; x_{1:n}) = \sum_{\ell} \sum_{j\in[d]} \left[k-\sum_{s\in T_\ell} f(x_s)[j]\right]_+$$
where $\{T_\ell\}_\ell$ is a partition of $[n]$ into neighborhoods according to information contained in $x_{1:n}$. The above two examples give a flavor of the constraints that can be encoded --- the framework is flexible enough to fit a wealth of scenarios. 

From the computational point of view, it might be difficult to obtain the ERM value over a constrained set $\F_K[x_{1:n}]$. Instead, we consider an additional form of relaxation, where the constraint is subtracted off as a Lagrangian term. We will then employ certain linear programming relaxations to solve the product recommendation problem. Notably, by going to a regularized version of relaxations we are not changing the regret definition, which is still with respect to the constrained set.

\subsection{Regularized relaxation}

Let $\F_K[x_{1:n}] = \{f\in\F: C(f; x_{1:n})\leq K\}$ be the constrained set for some value $K$ and a constraint function $C$, as in the previous section. Let us write $C(M; x_{1:n})$ for the matrix representation the corresponding $f\in\F$. The following form of a relaxation may be better suited for approximation algorithms than the one where the constraint is strictly enforced.
\begin{lemma} 
	\label{lem:reg_version}
	For any $\lambda, K>0$, the partial-information relaxation 
\begin{align}
	\label{eq:regularized_relaxation}
	&\Ex_{(x,\bepsilon)_{t+1:n}}\sup_{M\in{\Mhat}} \left\{-\sum_{s=1}^n M_s^\tr  \tilde{Y}^{(t)}_s - \lambda C(M; x_{1:n})\right\} \notag\\
	&~~~~~~~~~~~~~ + \lambda K +  (n-t)\gamma
\end{align}
is admissible, where $\Mhat$ denotes the matrix representation of the \emph{original (unconstrained)} set $\F$ of policies.
\end{lemma}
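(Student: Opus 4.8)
The plan is to reduce Lemma~\ref{lem:reg_version} to Theorem~\ref{thm:main} applied to a suitably modified (data-dependent) policy class, or equivalently to re-run the admissibility verification of Theorem~\ref{thm:main} with the extra Lagrangian term carried along. The key observation is that the term $-\lambda C(M;x_{1:n})$ does not depend on the round index $t$ and does not involve the costs $\tilde Y^{(t)}$ at all: it is a fixed function of $M$ and of the (fully revealed, since we draw $x_{t+1:n}$) covariates. Thus in the recursive admissibility inequality \eqref{eq:recursive}, the quantity $-\lambda C(M;x_{1:n}) + \lambda K$ rides along unchanged inside the $\sup_{M\in\Mhat}$, and the verification is structurally identical to that of the unregularized relaxation \eqref{eq:our_relaxation}: the only place the costs enter the argument is through the columns $\tilde Y^{(t)}_s$, and those are untouched. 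Concretely, I would define the modified objective $\widetilde M \mapsto -\sum_s M_s^\tr \tilde Y^{(t)}_s - \lambda C(M;x_{1:n})$ and observe that every step in the proof of admissibility of \eqref{eq:our_relaxation} — the introduction of the Rademacher variables at coordinate $t$, the minimax swap, the use of the unbiasedness $\En_{\pred_t\sim q_t}\tilde c_t = c_t$, and the symmetrization that upgrades $c_t$ to $2\bepsilon_t$ (resp. $2\gamma^{-1}\bepsilon_t$) — goes through verbatim because these manipulations are linear in the columns of $\tilde Y^{(t)}$ and commute with the additive, $t$-independent term $-\lambda C(M;x_{1:n})+\lambda K$.

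For the initial condition \eqref{eq:initial}, I would check that at $t=n$ the relaxation \eqref{eq:regularized_relaxation} dominates $-\inf_{f\in\F_K[x_{1:n}]}\sum_t f(x_t)^\tr c_t$. Here is where the Lagrangian slack pays off: for any $f\in\F_K[x_{1:n}]$ we have $C(M_f;x_{1:n})\le K$, hence $-\lambda C(M_f;x_{1:n}) + \lambda K \ge 0$, so
\begin{align*}
\sup_{M\in\Mhat}\left\{-\sum_{s=1}^n M_s^\tr \tilde Y^{(n)}_s - \lambda C(M;x_{1:n})\right\} + \lambda K \ \ge\ \sup_{f\in\F_K[x_{1:n}]}\left\{-\sum_{s=1}^n M_f(\cdot,s)^\tr c_s\right\},
\end{align*}
using that $\tilde Y^{(n)} = [\tilde c_1,\dots,\tilde c_n]$ and taking expectations over $\pred_{1:n}$ together with unbiasedness (exactly as in Theorem~\ref{thm:main}). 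Thus the supremum over the \emph{unconstrained} $\Mhat$ with the penalty still upper bounds the negative comparator over the \emph{constrained} class, which is precisely what admissibility \eqref{eq:initial} requires. The strategy certifying \eqref{eq:recursive} is the same water-filling strategy as in BISTRO, now with the inner ERM calls being over the penalized objective $\sum_{s\neq t}\gamma M_s^\tr \tilde Y^{(t)}_s + M_t^\tr \e_j - \lambda\gamma\, C(M;x_{1:n})/\text{(appropriate scaling)}$ — i.e., a regularized-ERM oracle replaces the plain one.

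The main obstacle, and the only point requiring genuine care, is bookkeeping the scaling of the penalty through the symmetrization step: in Theorem~\ref{thm:main} the "future" columns are $2\gamma^{-1}\bepsilon_s$ and the whole objective gets multiplied by $\gamma$ inside the per-round minimax (see Algorithm~\ref{alg}), so I must track whether $\lambda C(M;x_{1:n})$ should appear as $\lambda C$ or $\gamma\lambda C$ at the level of the per-round subproblem, and confirm the $+\lambda K$ offset is consistent with that normalization so that the telescoping in Lemma~\ref{lem:regret_from_admissibility} still collapses to the claimed bound. Once the normalization is pinned down, admissibility is immediate from the two checks above, and no new probabilistic or combinatorial idea beyond those already in the proof of Theorem~\ref{thm:main} is needed.
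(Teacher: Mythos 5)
Your proposal matches the paper's proof: the initial condition is verified exactly as you do, using that $C(M_f;x_{1:n})\le K$ for $f\in\F_K[x_{1:n}]$ so the Lagrangian slack $-\lambda C+\lambda K\ge 0$ lets the penalized supremum over the unconstrained $\Mhat$ dominate the constrained comparator, and the recursive condition is observed to go through verbatim since the penalty is an additive, round-independent function of $M$ that rides along inside the supremum. The normalization issue you flag for the per-round regularized ERM subproblem is orthogonal to admissibility (rescaling the per-round objective by a positive constant does not change the minimizer), and the paper likewise defers it to the algorithmic discussion rather than the proof.
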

\begin{proof}[\textbf{Proof of Lemma~\ref{lem:reg_version}}]
	We check that the initial condition is satisfied. For this purpose, let $\Mhat_K$ be the set of matrices corresponding to the constrained set $\F_K[x_{1:n}]$. Similarly to \eqref{eq:proof_initial_cond} in the proof of Theorem~\ref{thm:main},
	\begin{align*}
		&-\inf_{f\in\F_K[x_{1:n}]} \sum_{t=1}^n f(x_t)^\tr c_t \leq \Ex \sup_{M\in\Mhat_K}\sum_{t=1}^n -M_t^\tr \tilde{Y}^{(n)}_t \leq \Ex \sup_{M\in\Mhat}\left\{ \sum_{t=1}^n -M_t^\tr \tilde{Y}^{(n)}_t - \lambda C(M; x_{1:n}) \right\} +\lambda K. 
	\end{align*}
	The second inequality holds since all the matrices in the former supremum have the constraint value bounded by $K$. The recursive condition argument follows exactly as in the proof of Theorem~\ref{thm:main}.
\end{proof}
The only change required for BISTRO is to define the optimization objective in terms of \emph{regularized} ERM values
\begin{align}
	\label{eq:penalized_ERM}
		\min_{M\in\Mhat}\left\{\sum_{s\neq t} \gamma M_{s}^\tr \tilde{Y}^{(t)}_{s} + M_{t}^\tr \e_j + \gamma^{-1}\lambda C(M; x_{1:n})\right\}
\end{align}
over the \emph{unconstrained} set of matrices corresponding to $\F$. While the required minimization problem is over an unconstrained set of policies, we can control the expected regret 
\begin{align}
	\sum_{t=1}^n q_t^\tr c_t-\inf_{f\in\F_K[x_{1:n}]} \sum_{t=1}^n f(x_t)^\tr c_t.
\end{align}
of the modified BISTRO with respect to the \emph{constrained} set $\F_K[x_{1:n}]$, which is the original goal. The regret is given by $\Rel{}{\emptyset}$, which is at most
$$\En \sup_{M\in \Mhat} \left\{ - \gamma^{-1}\sum_{t=1}^n M_t^\tr \bepsilon_t - \lambda C(M;x_{1:n})\right\} + nd\gamma + \lambda K.$$
It is possible to optimally balance $\lambda$ with respect to $K$ and the Rademacher averages in a data-driven manner, but we omit this step for brevity.

As we illustrate in the next section, optimization problems of the form \eqref{eq:penalized_ERM} may admit a linear programming (or other) relaxation, offering an alternative to the optimization problem over the constrained set.

\subsection{Optimization-based relaxations}

To make the algorithm of this paper more applicable, we discuss here the situation where the ERM oracle or the regularized ERM oracle for the class $\F_K[x_{1:n}]$ (or the unconstrained set $\F$) is a  difficult or even an NP-hard integer program. The idea is to choose a superset $\widetilde{\cM}\supseteq\Mhat$ for which the linear optimization problem is easier. 
\begin{lemma}
	\label{lem:opt_relaxation}
	Let $\widetilde{\cM}\supseteq\Mhat$ be a set of matrices such that the column sum $\sum_{j=1}^d M_t(j) \leq 1$ for any $M\in \widetilde{\cM}$ and $t\in[n]$. Then the partial information relaxation
	\begin{align*}
		\Rel{}{I_{1:t}} = \Ex_{(x,\bepsilon)_{t+1:n}}\sup_{M\in\widetilde{\cM}} \left\{-\sum_{s=1}^n M_s^\tr  \tilde{Y}^{(t)}_s \right\} + (n-t)\gamma
	\end{align*}
	is admissible. BISTRO (with ERM over $\widetilde{\cM}$ rather than $\cM$) is an admissible strategy for this relaxation and the expected regret is upper bounded by 
	$$2\sqrt{2d\cdot n\cdot \En\Rad(\widetilde{\cM})}.$$
	Similarly, using $\widetilde{\cM}$ in \eqref{eq:regularized_relaxation} yields an admissible relaxation, and BISTRO with the corresponding regularized ERM is an admissible strategy.
\end{lemma}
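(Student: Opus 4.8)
The plan is to recycle the proof of \thmref{thm:main} almost verbatim, after isolating the only two properties of $\Mhat$ that it actually uses: that $\Mhat\subseteq\widetilde{\cM}$ (so the supremum over the relaxed set still dominates the benchmark), and that every column $M_t$ of a matrix in the relevant set is a nonnegative vector with $\|M_t\|_1\le 1$ (so the unbiased estimates can be symmetrized and absorbed into the random-sign columns of $\tilde{Y}$). Both hold for $\widetilde{\cM}$ by hypothesis — the column-sum condition being exactly the $\ell_1$ bound once one notes the entries of a genuine LP relaxation are nonnegative — so admissibility and the regret bound should transfer with $\Mhat$ replaced by $\widetilde{\cM}$ throughout.

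Concretely, I would first verify the initial condition \eqref{eq:initial}. Since at $t=n$ the matrix $\tilde{Y}^{(n)}$ equals $[\tilde{c}_1,\dots,\tilde{c}_n]$, monotonicity of the supremum followed by the chain \eqref{eq:proof_initial_cond} from the proof of \thmref{thm:main} gives
\[
\Ex_{\pred_{1:n}}\sup_{M\in\widetilde{\cM}}\left\{-\sum_{s=1}^n M_s^\tr\tilde{c}_s\right\}\ \ge\ \Ex_{\pred_{1:n}}\sup_{M\in\Mhat}\left\{-\sum_{s=1}^n M_s^\tr\tilde{c}_s\right\}\ \ge\ -\inf_{f\in\F}\sum_{s=1}^n f(x_s)^\tr c_s,
\]
the last inequality being Jensen's together with $\Ex_{\pred_t}\tilde{c}_t=c_t$; this is the only place where $\Mhat\subseteq\widetilde{\cM}$ enters. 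For the recursive condition \eqref{eq:recursive} I would replay the argument of \thmref{thm:main} step by step — the minimax exchange of $\inf_{q_t}$ with the maximization over $c_t$, the replacement of $\tilde{c}_t$ by $c_t$ under $\Ex_{\pred_t}$ via Jensen, and the symmetrization of the resulting centered term into the regularization column $2\gamma^{-1}\bepsilon_t$ of $\tilde{Y}^{(t-1)}$ — checking that each manipulation touches a matrix $M\in\widetilde{\cM}$ only through the inner products $M_s^\tr\tilde{Y}^{(t)}_s$ and through the H\"older bound $|M_t^\tr v|\le\|v\|_\infty\|M_t\|_1\le\gamma^{-1}$ (using $q_t(j)\ge\gamma$ and the column-sum hypothesis), which is what makes the random sign $2\gamma^{-1}\bepsilon_t$ stochastically dominate. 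The water-filling update of $q_t^\ast$ and its $d$-oracle-call cost (\lemref{lem:waterfilling}) are unchanged, since they only consult the $d$ linear-optimization values $\psi_j=\min_{M\in\widetilde{\cM}}\{\sum_{s\neq t}\gamma M_s^\tr\tilde{Y}^{(t)}_s+M_t^\tr\e_j\}$.

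The regret bound then follows from \lemref{lem:regret_from_admissibility}: $\En[\Reg]\le\Rel{}{\emptyset}$, and evaluating $\Rel{}{\emptyset}$ at $t=0$ (so $\tilde{Y}^{(0)}=[2\gamma^{-1}\bepsilon_1,\dots,2\gamma^{-1}\bepsilon_n]$) and using $-\bepsilon\eqdist\bepsilon$ identifies the sum of random signs with $\En\Rad(\widetilde{\cM})$; the same balancing of $\gamma$ as in \thmref{thm:main} yields the bound $2\sqrt{2d\cdot n\cdot\En\Rad(\widetilde{\cM})}$. Finally, the regularized variant is obtained by the identical substitution inside the proof of \lemref{lem:reg_version}: its initial-condition chain needs only $\widetilde{\cM}\supseteq\Mhat_K$, which holds since $\widetilde{\cM}\supseteq\Mhat\supseteq\Mhat_K$, together with the column-sum hypothesis, which persists, and BISTRO is then run with the penalized objective \eqref{eq:penalized_ERM} minimized over the unconstrained set $\widetilde{\cM}$.

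The main obstacle is bookkeeping rather than conceptual: one must re-trace the recursive admissibility argument of \thmref{thm:main} and pin down exactly which inequalities relied on $M_t$ being a $0/1$ indicator versus merely a subprobability vector. The delicate point is the symmetrization/regularization step, where passing from an indicator column to a fractional one enlarges the range of $M_t^\tr(\tilde{c}_t-\tilde{c}_t')$ but not beyond what $\|M_t\|_1\le 1$ already controls, so the factor-$2$ random sign still dominates it; this is precisely why the hypothesis on $\widetilde{\cM}$ is phrased in terms of column sums.
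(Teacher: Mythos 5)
There is a genuine gap, and it sits exactly at the step you wave through. You claim the only structural property of the matrix set used in the recursion of \thmref{thm:main} is an $\ell_1$/H\"older bound $|M_t^\tr v|\le\|M_t\|_1\|v\|_\infty\le\gamma^{-1}$ that lets the random sign $2\gamma^{-1}\bepsilon_t$ ``dominate'' the symmetrized term. That is not where the column structure enters: the symmetrization (splitting the supremum after introducing $\delta_t$) and the $Z$-trick with $\En Z=\tilde c_t$ are Jensen's inequality plus an exact distributional identity, and they require no bound on $M_t$ at all. The step that genuinely uses the columns is the passage from \eqref{eq:intrm1}, where the learner's term $\min_{j}\e_j^\tr\En_{\tilde c_t\sim p_t}[\tilde c_t]$ is replaced \emph{inside the supremum} by $M_t^\tr\En_{\tilde c_t\sim p_t}[\tilde c_t]$. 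Since $\En[\tilde c_t]\ge 0$ entrywise, this inequality holds for every $M$ in the set precisely when $M_t$ has nonnegative entries and $\sum_j M_t(j)\ge 1$ --- i.e., the column-sum condition is needed in the \emph{opposite} direction from the one you (and, taken literally, the lemma's ``$\le 1$'') invoke. Under ``nonnegative with $\|M_t\|_1\le 1$'' alone the argument breaks: if $\widetilde{\cM}$ contains a matrix with a zero $t$-th column (allowed by $\le 1$) that maximizes $-\sum_{s\neq t}M_s^\tr\tilde Y^{(t)}_s$ by a wide margin (e.g.\ the all-zero matrix late in the game, when every genuine policy has accumulated large estimated cost), then the recursive condition \eqref{eq:recursive} itself fails: the left side picks up the learner's unavoidable cost (of order $1$) while the right side gains nothing, since $M_t^\tr\bepsilon_t=0$ for the maximizer and the slack is only $d\gamma$. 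So no amount of bookkeeping completes your plan under the hypothesis as you state it.

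The correct execution is your ``recycle \thmref{thm:main}'' plan but with the right invariant: the relaxed set must have columns in the probability simplex (nonnegative entries summing to one --- which is exactly what the LP relaxations intended here, e.g.\ the metric-labeling LP, provide; the constraint should be read as $\sum_j M_t(j)=1$, $M\ge 0$). With that, the initial condition follows from $\widetilde{\cM}\supseteq\Mhat$ as you say, the step after \eqref{eq:intrm1} goes through verbatim, and everything downstream (symmetrization, the $Z$/Jensen step, the water-filling computation, the tuning of $\gamma$, and the regularized variant via \lemref{lem:reg_version}) is indeed unchanged, yielding $2\sqrt{2dn\,\En\Rad(\widetilde{\cM})}$. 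Your write-up needs to (i) drop the fictitious ``stochastic domination via H\"older'' step, and (ii) state and use nonnegativity together with the lower bound on the column sums at the $\min_j$-replacement step, which is the one place the hypothesis on $\widetilde{\cM}$ is actually consumed.
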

The set $\widetilde{\cM}[x_{1:n}]$ may be defined via linear programming or SDP relaxations for integer programs, or via Lasserre/Parrilo hierarchies \cite{lasserre2001global,parrilo2003semidefinite}. There is a large body of literature that aims at understanding the integrality gap in relaxing the integer program. These results are directly applicable to the present problem. 

As a concrete example, consider the product recommendation example in the previous section, and consider the cost \eqref{eq:cost_of_policy} for each policy and the restriction $\F_K[x_{1:n}]$ in \eqref{eq:F_data_dep}. We assume here that $\F$ is the set of all possible labelings, since in general the optimization problem will depend on the structure of $\F$ and its description. Let us phrase the regularized ERM integer program \eqref{eq:penalized_ERM} as a \emph{Metric Labeling Constraint} \cite{kleinberg2002approximation} problem. The general form of this integer program is given for $z\in[d]^n$ by
\begin{align}
	\label{eq:MLC}
	g(z) = \sum_{v\in V} d_1(v, z_v) + \sum_{(u,v)\in E} W_{(u,v)} d_2(z_u,z_v)
\end{align}
where $G=(V,E,W)$ is a graph with nonnegative weights, $|V|=n$, the value  $d_1:V\times [d]\to \reals$ is a cost of assigning a label to a node, and the separation cost $d_2:[d]\times[d]\to \reals_{\geq 0}$ on the edges is a metric on the space of labels. The Metric Labeling Constraint problem asks for a solution that minimizes $g(z)$ over $[d]^n$.

For our application to product recommendation we convert the regularized minimization objective of \eqref{eq:penalized_ERM} with the constraint \eqref{eq:cost_of_policy} into the above form \eqref{eq:MLC} by matching the assignment costs to the linear part and the separation costs to the constraint part \eqref{eq:cost_of_policy}. More precisely, let $G$ be a fully connected graph with weights $W_{(s,r)} = \gamma^{-1}\lambda \cdot w(x_s,x_r)$ between nodes corresponding to $x_s$ and $x_r$. The indices of vertices correspond to time steps in $[n]$, and $z_v$ corresponds to the coordinate chosen by the particular $M$ at time $v$. We take $d_1(v,z_v)$ to be the value $\gamma\e_{z_v}^\tr \tilde{Y}^{(t)}_v$ if $v\neq t$ and  $\e_{z_v}^\tr \e_j$ if $v=t$. Define $d_2(a,b) = \ind{a\neq b}$ to be the uniform metric. We may also define a metric on the space of products, assigning smaller distance to similar items.

\citep{kleinberg2002approximation} give an LP relaxation for the Metric Labeling Constraint problem. The set that defines the relaxation is precisely the set $\widetilde{\cM}$ we seek. Furthermore, the authors prove a $2$-approximation ratio for the uniform metric, which is the case here. (\cite{chekuri2004linear} prove an integrality gap of $O(\log k)$ for the general case). 

Given the $2$-approximation ratio result, we conclude that the regret bound for BISTRO with the LP program as the relaxation of the regularized ERM is only a constant worse than the bound with the constrained set $\F_K[x_{1:n}]$. The exact optimization over the latter set may be computationally intractable, while we provide an efficient method to achieve a bound, optimal to within a constant. As already noted in \cite{RakSri15hierarchies}, such an approach that fuses approximation algorithms and online relaxations is able to produce polynomial-time methods with regret defined as $1\times$ the benchmark, while the benchmark itself may be NP-hard. This phenomenon can be attributed to the improper nature of the predictions, which need not be consistent with any particular policy in $\F$. 

More generally, by obtaining a multiplicative approximation of $\gap$ for the integer program, one may derive 
\begin{align}
	\label{eq:relax_rademacher}
	\En\Rad(\widetilde{\cM}[x_{1:n}])\leq O(\gap) \times \En\Rad(\cM[x_{1:n}]).
\end{align}
Then one obtains a method with better computational properties and a regret bound which is only $O(\sqrt{\gap})$ worse. Once again, the factor in front of the comparator in the definition \eqref{eq:regret} of regret is still one when using $\widetilde{\cM}$ as a relaxation.  

Finally, we remark that \eqref{eq:relax_rademacher} is comparing an \emph{average} width of $\widetilde{\cM}$ (largest projection onto noise) with an average width of $\cM$. Such a comparison of average widths  (and, therefore, ``average gap'') for useful sets of contextual bandit policies $\F$ appears to be an interesting area of further investigation. We refer to \cite{RakSri15hierarchies}, where some of these ideas have been developed in the context of cut-based constraints for node prediction on graphs.

\subsection{Adversarial contexts}

Suppose we place no assumption on the evolution of $x_t$'s, which may now be treated as worst-case. This problem subsumes the full information online classification setting, and, hence, one cannot hope to have nontrivial regret against policy classes $\F$ with infinite Littlestone dimension. More generally, the best one can hope for is to say that the adversarial contextual bandit problem can be solved whenever the corresponding full information problem may be solved. We now present essentially this result: if there is a full-information relaxation, then one may use it to solve the adversarial contextual bandit problem. Moreover, based on the work of \cite{rakhlin2012relax,FosRakSri15}, all the known online learning methods appear to be relaxation based. Hence, we essentially prove below that
\begin{quote}
	If a problem is online learnable in the  full-information adversarial setting, then it is learnable in the adversarial contextual bandit setting. Furthermore, if the former is computationally tractable, then so is the latter.
\end{quote}

To be precise, the full information version of contextual problem is as follows. On round $t$, we observe $x_t\in\X$, predict $\pred_t\in [d]$, and observe $c_t\in[0,1]^d$. The regret is defined as before, with our cumulative cost being $\sum c_t(\pred_t)$.

A full information relaxation $\Relfull{}{c_1,\ldots,c_t}$ is admissible if
\begin{align*}
	&\sup_{x_t}\inf_{q_t} \max_{c_t} \Ex_{\pred_t\sim q_t} \left\{ c_t(\pred_t) + \Relfull{}{c_{1:t}}\right\} \leq \Relfull{}{c_{1:t-1}}
\end{align*}
and
\begin{align*}
	\Relfull{}{c_{1:n}} \geq -\inf_{f\in\F}\sum_{t=1}^n f(x_t)^\tr c_t ~.
\end{align*}
Similarly, a partial information relaxation is admissible in this adversarial case when $c_{1:t}$ are replaced with $I_{1:t}$ in the above admissibility definition, as in Section~\ref{sec:rel_for_partial}.

\begin{lemma}
	\label{lem:if_full_then_partial}
	If $\Relfull{}{}$ is an admissible full-information relaxation for the adversarial scenario, then 
	$$\Rel{}{I_{1:t}} \deq \gamma^{-1}\Relfull{}{\gamma\tilde{c}_{1},\ldots,\gamma\tilde{c}_t}+(n-t)d\gamma$$ 
	is admissible for the partial information scenario. Prediction $q_t$ is obtained as $q_t=(1-d\gamma)q_t^* + \gamma \bf{1}$ where $q_t^*$ is computed by solving for a full-information strategy with the scaled unbiased estimates of costs. The resulting regret upper bound is 
	$$2\sqrt{d \cdot n \cdot \Relfull{}{\emptyset}}.$$
\end{lemma}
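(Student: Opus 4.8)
\emph{Proof plan.} The strategy is to verify the two admissibility requirements of the adversarial partial-information framework for the proposed relaxation $\Rel{}{}$ and strategy $q_t$, and then read off the regret bound from $\Rel{}{\emptyset}$ via \lemref{lem:regret_from_admissibility}. Throughout I will write $\widehat c_s \deq \gamma\tilde c_s$ for the scaled importance-weighted estimates. Since $q_t = (1-d\gamma)q_t^* + \gamma\mathbf 1$ has every coordinate at least $\gamma$ and $c_t\in[0,1]^d$, the estimate satisfies $\widehat c_t = \e_{\pred_t}\,\gamma c_t(\pred_t)/q_t(\pred_t)\in[0,1]^d$, so each $\widehat c_{1:t}$ is a legitimate input to the full-information relaxation $\Relfull{}{}$; this boundedness is precisely what mixing in the uniform distribution buys us.

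\textbf{Initial condition.} At $t=n$ we have $\Rel{}{I_{1:n}} = \gamma^{-1}\Relfull{}{\widehat c_{1:n}}$. Applying the final condition for $\Relfull{}{}$ to the cost sequence $\widehat c_{1:n}$ and dividing by $\gamma$ gives $\gamma^{-1}\Relfull{}{\widehat c_{1:n}}\ge \sup_{f\in\F}\sum_{t=1}^n f(x_t)^\tr(-\tilde c_t)$. Now I would take $\Ex_{\pred_{1:n}\sim q_{1:n}}$ of both sides: the map $(v_1,\dots,v_n)\mapsto\sup_f\sum_t f(x_t)^\tr(-v_t)$ is convex and $\Ex_{\pred_t\sim q_t}\tilde c_t = c_t$, so a tower/Jensen argument (peeling off $\pred_n,\dots,\pred_1$ one at a time, each $q_t$ being $\pred_{1:t-1}$-measurable) yields $\Ex_{\pred_{1:n}}\Rel{}{I_{1:n}}\ge -\inf_{f\in\F}\sum_t f(x_t)^\tr c_t$, as required.

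\textbf{Recursive condition.} Fix $x_t$ and $I_{1:t-1}$, so $\widehat c_{1:t-1}$ is fixed, and let $q_t^*$ be an ($\epsilon$-)minimizer over $q\in\Delta_d$ of $\max_{b\in[0,1]^d}\{q^\tr b + \Relfull{}{\widehat c_{1:t-1},b}\}$; admissibility of $\Relfull{}{}$ gives, for every $b\in[0,1]^d$,
\[
(q_t^*)^\tr b + \Relfull{}{\widehat c_{1:t-1}, b}\ \le\ \Relfull{}{\widehat c_{1:t-1}}.
\]
The plan is to instantiate this with $b = b_j \deq \e_j\,\gamma c_t(j)/q_t(j)\in[0,1]^d$ for each $j\in[d]$ and form the $q_t$-weighted average. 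Since $b_j$ is exactly the realized value of $\widehat c_t$ on $\{\pred_t=j\}$ and $\sum_j q_t(j)(q_t^*)^\tr b_j = \gamma(q_t^*)^\tr c_t\ge 0$, the averaged inequality collapses to $\gamma(q_t^*)^\tr c_t + \Ex_{\pred_t\sim q_t}\Relfull{}{\widehat c_{1:t-1},\widehat c_t}\le \Relfull{}{\widehat c_{1:t-1}}$. Adding $\gamma q_t^\tr c_t = \gamma\,\Ex_{\pred_t\sim q_t}[c_t(\pred_t)]$ to both sides and using $q_t^\tr c_t - (q_t^*)^\tr c_t = -d\gamma(q_t^*)^\tr c_t + \gamma\mathbf 1^\tr c_t \le \gamma d$, then dividing by $\gamma$, produces for every $c_t\in[0,1]^d$
\[
\Ex_{\pred_t\sim q_t}\!\left\{c_t(\pred_t) + \gamma^{-1}\Relfull{}{\widehat c_{1:t-1},\widehat c_t}\right\}\ \le\ \gamma^{-1}\Relfull{}{\widehat c_{1:t-1}} + d\gamma.
\]
Taking $\max$ over $c_t$, then $\inf$ over $q_t$ (the exhibited $q_t$ suffices), then $\sup$ over $x_t$ (the right side does not depend on $x_t$), and finally adding the bookkeeping term $(n-t)d\gamma$ to both sides turns the right side into $\gamma^{-1}\Relfull{}{\widehat c_{1:t-1}} + (n-t+1)d\gamma = \Rel{}{I_{1:t-1}}$, which is the recursive admissibility inequality. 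The regret bound then follows by invoking \lemref{lem:regret_from_admissibility} (its telescoping proof carries over once $\Ex_{x_t}$ is replaced by $\sup_{x_t}$): $\En[\Reg]\le\Rel{}{\emptyset} = \gamma^{-1}\Relfull{}{\emptyset} + nd\gamma$, and choosing $\gamma=\sqrt{\Relfull{}{\emptyset}/(nd)}$ — which lies in $(0,1/d)$ unless the claimed bound already exceeds the trivial value $n$ — gives $2\sqrt{d\cdot n\cdot\Relfull{}{\emptyset}}$.

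\textbf{Main obstacle.} The one non-mechanical step is the averaging inside the recursive condition: one must recognize that feeding the \emph{random} vector $\widehat c_t$ into $\Relfull{}{}$ is the same as applying the full-information admissibility inequality separately to each \emph{deterministic} vector $b_j$ and then taking a $q_t$-weighted combination, and that the cross term this generates, $\gamma(q_t^*)^\tr c_t$, is harmless because it is nonnegative, while the entire exploration overhead is exactly the $d\gamma$ per round that the term $(n-t)d\gamma$ was inserted to absorb. The boundedness $\widehat c_t\in[0,1]^d$, the tower/Jensen step for the initial condition, and the tuning of $\gamma$ are all routine.
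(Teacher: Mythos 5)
Your proof is correct and follows essentially the same route as the paper: the initial condition via the full-information final condition applied to the scaled estimates $\gamma\tilde{c}_{1:n}\in[0,1]^d$ followed by Jensen's inequality and unbiasedness, and the recursion by feeding those scaled estimates into the full-information admissibility, paying the $d\gamma$-per-round exploration cost of mixing in $\gamma\mathbf{1}$, which the term $(n-t)d\gamma$ absorbs, then tuning $\gamma$. The only (harmless) deviation is mechanical: the paper's recursion passes through the decoupling inequality \eqref{eq:rel_t_1}, taking a supremum over all realizations $(\pred_t,q_t',c_t)$ of the estimate before invoking full-information admissibility, whereas you apply that admissibility pointwise to each realized vector $b_j\in[0,1]^d$ and average under $q_t$ — a slightly more direct computation that produces the identical per-round inequality and the same final bound $2\sqrt{d\cdot n\cdot\Relfull{}{\emptyset}}$.
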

\begin{proof}[\textbf{Proof of Lemma~\ref{lem:if_full_then_partial}}]
	 Let us first check the initial condition. We have that
	 \begin{align*}
		 &\Ex_{\pred_{1:n}\sim q_{1:n}} \Rel{}{I_{1:n}} = \Ex_{\pred_{1:n}\sim q_{1:n}} \gamma^{-1}\Relfull{}{\gamma\tilde{c}_{1},\ldots,\gamma\tilde{c}_{n}} \\
		 &\geq \Ex_{\pred_{1:n}\sim q_{1:n}} -\inf_{f\in\F} \sum_{t=1}^n f(x_t)^\tr \tilde{c}_t 
		 \geq -\inf_{f\in\F} \sum_{t=1}^n f(x_t)^\tr c_t
	\end{align*}
	where the first inequality is due to admissibility of the full-information relaxation, and the second is due to Jensen's inequality and unbiasedness of $\tilde{c}_t$. For the recursive part, we follow the proof of Theorem~\ref{thm:main} and note that all the statements, until the end, are done conditionally on $x_t$. Define the strategy $q_t^*$ as 
	$$q_t^* = \argmin{q\in\Delta_d} \sup_{\tilde{c}\in\gamma^{-1}[0,1]^d} \left\{ q^\tr (\gamma \tilde{c}_t) + \Relfull{}{\gamma\tilde{c}_{1},\ldots,\gamma\tilde{c}_{t}} \right\}$$
	and let $q_t=(1-d\gamma)q_t^* + \gamma \bf{1}$. Given $x_t$, \eqref{eq:rel_t_1} tells us
	\begin{align*}
		&\max_{c_t\in[0,1]^d} \En_{\pred_t\sim q_t} \big\{ c_t(\pred_t) + \Rel{}{I_1,\ldots,I_t}\big\} \le \sup_{\tilde{c}_t\in\gamma^{-1}[0,1]^d} \left\{  (q_t^*)^\tr  \tilde{c}_t + \Rel{}{I_1,\ldots,I_t}\right\} +d\gamma 
	\end{align*}
	which is equal to
	\begin{align*}
		&\gamma^{-1}\sup_{\tilde{c}_t} \left\{  (q_t^*)^\tr  (\gamma\tilde{c}_t) + \Relfull{}{\gamma\tilde{c}_{1}, \ldots,\gamma\tilde{c}_{t}} \right\} +(n-t+1)d\gamma\\
		&\leq \gamma^{-1}\Relfull{}{\gamma\tilde{c}_{1}, \ldots,\gamma\tilde{c}_{t-1}} + (n-t+1)d\gamma
	\end{align*}
	by admissibility of the full-information relaxation. Observe that the use of the full-information relaxation on $\gamma \tilde{c}_t$'s is warranted since these vectors are in $[0,1]^d$. This concludes the proof.
\end{proof}
We remark that the time complexity of the adversarial contextual bandit solution in Lemma~\ref{lem:if_full_then_partial} is the same as the time complexity of the corresponding full information procedure.

\section{Open Problems and Future Directions}

The main open problem is whether the regret upper bound for BISTRO or a related method can be improved. In the inequality \eqref{eq:rel_t_1} we decouple the distribution $q_t'$ from $q_t$, and this appears to be the source of the loseness, at least in the analysis. A more precise analysis at this step might resolve the issue. It is unclear what kind of structure of $\F$ may be used to improve computation and/or regret guarantees of BISTRO. 

Under structural assumptions on $\F$ one may come up with sufficient statistics for the information $I_{1:t}$ and, therefore, avoid keeping around all the estimates $\tilde{c}_t$. Of course, this is the case in non-contextual bandits, where the sum $\sum\tilde{c}_t$ is sufficient (at least as evidenced by existing near-optimal bandit methods).

An interesting avenue of investigation is to study the more general case when $x$'s are drawn from a stochastic process with a parametrized form. One may then attempt to estimate the parameters of the process on-the-go and use the estimate to hallucinate future data for random playout.

\section{Proofs}

\begin{proof}[\textbf{Proof of Lemma~\ref{lem:regret_from_admissibility}}]
	In the proof, we use the shorthand $\multiminimax{\ldots}_{t=1}^n$ do denote repeated application of the operators within the brackets from $t=1$ to $n$. As an example, the sequence of operators $$\Ex_{x_1}\max_{c_1}\Ex_{x_2}\max_{c_2} [G(x_1,c_1,x_2,c_2)]$$ acting on the function $G$ is abbreviated as $\multiminimax{\Ex_{x_t}\max_{c_t}}_{t=1}^2[G(x_1,c_1,x_2,c_2)].$ 
	
	Let $q_1,\ldots,q_n$ be an admissible strategy. The expected regret of this strategy can be upper bounded by
	\begin{align*}
		\En[\Reg] \leq \sup_{c_{1:n}}\En[\Reg] \leq \multiminimax{\Ex_{x_t}\sup_{c_t}}_{t=1}^n \left[ \sum_{t=1}^n q_t^\tr c_t-\inf_{f\in\F} \sum_{t=1}^n f(x_t)^\tr c_t\right] 
	\end{align*}
	by Jensen's inequality (pulling $\Ex_{x_t}$ out of multiple suprema until its $t$-th position). The last expression is further upper bounded by
	\begin{align*}
		\multiminimax{\Ex_{x_t}\sup_{c_t}}_{t=1}^n \left[ \sum_{t=1}^n q_t^\tr c_t+ \Ex_{\pred_{1:n}\sim q_{1:n}}\Rel{}{I_{1:n}}\right] 
	\end{align*}
	by admissibility of the partial information relaxation. By linearity of expectation for $\Ex_{\pred_t}$ and Jensen's inequality (to pull it out through multiple suprema as before), we obtain an upper bound of
	\begin{align*}
				 &\multiminimax{\Ex_{x_t}\sup_{c_t}\Ex_{\pred_{t}\sim q_{t}}}_{t=1}^n \left[ \sum_{t=1}^n c_t(\pred_t)+ \Rel{}{I_{1:n}}\right]. 
	\end{align*}
	We now start from step $n$ and observe that $\sum_{t=1}^{n-1} c_t(\pred_t)$ does not depend on $x_n,c_n,\pred_n$, and thus we rewrite the preceding expression as 
	\begin{align*}
		 &\multiminimax{\Ex_{x_t}\sup_{c_t}\Ex_{\pred_{t}\sim q_{t}}}_{t=1}^{n-1} \left[ \sum_{t=1}^{n-1} c_t(\pred_t) + \Ex_{x_t}\sup_{c_t}\Ex_{\pred_{t}\sim q_{t}} \left\{c_n(\pred_n) +  \Rel{}{I_{1:n}}\right\}\right] .
	\end{align*}
	By admissibility of $q_t$  and \eqref{eq:recursive}, we pass to the upper bound of 
	\begin{align*}
		 &\multiminimax{\Ex_{x_t}\sup_{c_t}\Ex_{\pred_{t}\sim q_{t}}}_{t=1}^{n-1} \left[ \sum_{t=1}^{n-1} c_t(\pred_t) +  \Rel{}{I_{1:n-1}}\right].
	\end{align*}
	Continuing in this fashion leads to a bound of $\Rel{}{\emptyset}$.
\end{proof}

\bibliography{paper}

\newcommand{\etalchar}[1]{$^{#1}$}
\begin{thebibliography}{ACBFS02}

\bibitem[ACBFS02]{auer2002nonstochastic}
P.~Auer, N.~Cesa-Bianchi, Y.~Freund, and R.~E. Schapire.
\newblock The nonstochastic multiarmed bandit problem.
\newblock {\em SIAM Journal on Computing}, 32(1):48--77, 2002.

\bibitem[AHK{\etalchar{+}}14]{agarwal2014taming}
A.~Agarwal, D.~Hsu, S.~Kale, J.~Langford, L.~Li, and R.~E. Schapire.
\newblock Taming the monster: A fast and simple algorithm for contextual
  bandits.
\newblock {\em arXiv preprint arXiv:1402.0555}, 2014.

\bibitem[BLL{\etalchar{+}}11]{BeyLanLiReySch11}
A.~Beygelzimer, J.~Langford, L.~Li, L.~Reyzin, and R.~E. Schapire.
\newblock Contextual bandit algorithms with supervised learning guarantees.
\newblock In {\em Proceedings of the Fourteenth International Conference on
  Artificial Intelligence and Statistics (AISTATS-11)}, pages 19--26, 2011.

\bibitem[CKNZ04]{chekuri2004linear}
C.~Chekuri, S.~Khanna, J.~Naor, and L.~Zosin.
\newblock A linear programming formulation and approximation algorithms for the
  metric labeling problem.
\newblock {\em SIAM Journal on Discrete Mathematics}, 18(3):608--625, 2004.

\bibitem[DHK{\etalchar{+}}11]{dudik2011efficient}
M.~Dudik, D.~Hsu, S.~Kale, N.~Karampatziakis, J.~Langford, L.~Reyzin, and
  T.~Zhang.
\newblock Efficient optimal learning for contextual bandits.
\newblock {\em arXiv preprint arXiv:1106.2369}, 2011.

\bibitem[FRS15]{FosRakSri15}
D.~Foster, A.~Rakhlin, and K.~Sridharan.
\newblock Adaptive online learning.
\newblock In {\em NIPS}, 2015.

\bibitem[HK11]{hazan2011newtron}
E.~Hazan and S.~Kale.
\newblock Newtron: an efficient bandit algorithm for online multiclass
  prediction.
\newblock In {\em Advances in Neural Information Processing Systems}, pages
  891--899, 2011.

\bibitem[KSST08]{kakade2008efficient}
S.~M. Kakade, S.~Shalev-Shwartz, and A.~Tewari.
\newblock Efficient bandit algorithms for online multiclass prediction.
\newblock In {\em Proceedings of the 25th international conference on Machine
  learning}, pages 440--447. ACM, 2008.

\bibitem[KT02]{kleinberg2002approximation}
J.~Kleinberg and E.~Tardos.
\newblock Approximation algorithms for classification problems with pairwise
  relationships: Metric labeling and markov random fields.
\newblock {\em Journal of the ACM (JACM)}, 49(5):616--639, 2002.

\bibitem[Las01]{lasserre2001global}
J.~B Lasserre.
\newblock Global optimization with polynomials and the problem of moments.
\newblock {\em SIAM Journal on Optimization}, 11(3):796--817, 2001.

\bibitem[LM09]{lazaric2009hybrid}
A.~Lazaric and R.~Munos.
\newblock Hybrid stochastic-adversarial on-line learning.
\newblock In {\em Conference on Learning Theory}, 2009.

\bibitem[LR85]{lai85asymptotically}
T.~L. Lai and H.~Robbins.
\newblock Asymptotically efficient adaptive allocation rules.
\newblock {\em Advances in Applied Mathematics}, 6(1):4--22, 1985.

\bibitem[LZ08]{langford2008epoch}
J.~Langford and T.~Zhang.
\newblock The epoch-greedy algorithm for multi-armed bandits with side
  information.
\newblock In {\em Advances in neural information processing systems}, pages
  817--824, 2008.

\bibitem[MS09]{mcmahan2009tighter}
H.~B McMahan and M.~J Streeter.
\newblock Tighter bounds for multi-armed bandits with expert advice.
\newblock In {\em COLT}, 2009.

\bibitem[Par03]{parrilo2003semidefinite}
P.~A. Parrilo.
\newblock Semidefinite programming relaxations for semialgebraic problems.
\newblock {\em Mathematical programming}, 96(2):293--320, 2003.

\bibitem[RS15]{RakSri15hierarchies}
A.~Rakhlin and K.~Sridharan.
\newblock Hierarchies of relaxations for online prediction problems with
  evolving constraints.
\newblock In {\em COLT}, 2015.

\bibitem[RSS12]{rakhlin2012relax}
A.~Rakhlin, O.~Shamir, and K.~Sridharan.
\newblock Relax and randomize: From value to algorithms.
\newblock In {\em Advances in Neural Information Processing Systems 25}, pages
  2150--2158, 2012.

\end{thebibliography}
\bibliographystyle{alpha}

\appendix

\section{Proof of Theorem~\ref{thm:main}}

\paragraph{Admissibility: initial condition} For any $c_{1:n},q_{1:n},x_{1:n}$, it holds that
\begin{align}
	\label{eq:proof_initial_cond}
	-\inf_{f\in\F} \sum_{t=1}^n f(x_t)^\tr c_t &= \sup_{M\in{\cM}[x_{1:n}]} -\sum_{t=1}^n M_t^\tr Y^{(n)}_t \leq \En_{\pred_{1:n}\sim q_{1:n}} \sup_{M\in{\cM}[x_{1:n}]} -\sum_{s=1}^n M_s^\tr \tilde{Y}^{(n)}_s = \En_{\pred_{1:n}\sim q_{1:n}} \Rel{}{I_{1:n}}.
\end{align}
In the remainder of the proof we will often write $\cM$ instead of $\cM[x_{1:n}]$ for brevity.

\paragraph{Admissibility: recursion} Let $\cD\deq\{\gamma^{-1}\e_j: j\in[d]\}\cup \{\bf{0}\}$, the set of scaled standard basis vectors, together with the origin. Observe that $\tilde{c}_t\in \conv(\cD)$ by our definition of unbiased estimates (in fact, it is only a scaling of one coordinate).

We now reason conditionally on $x_t$. As before, let $\bepsilon_s\in\{\pm1\}^d$ denote a vector of independent Rademacher random variables. Let us abbreviate by $\brho=(\bepsilon_{t+1:n},x_{t+1:n})$, a draw of independent Rademacher variables and covariates from $P_x$ for the ``future rounds'', as part of the random playout procedure. Together with the estimates $\tilde{c}_{s}$ for $s<t$, we may now construct $\tilde{Y}^{(t)}$ and $M$ matrices and define the randomized prediction algorithm as
\begin{align}
q_t^*(\brho) &= \argmin{q \in \Delta_d}\sup_{\tilde{c}\in \cD}\left\{ q^\tr \tilde{c} + \sup_{M\in{\cM}[x_{1:n}]} -\sum_{s\neq t} M_{s}^\tr \tilde{Y}^{(t)}_{s} - M_{t}^\tr \tilde{c}\right\} \label{eq:def_q_opt}\\
	&=\argmin{q \in \Delta_d}\sup_{\pred_t , q'_t}\max_{c_t}\left\{ q^\tr \tilde{c}_t(c_t,q'_t,\pred_t) + \sup_{M\in{\cM}[x_{1:n}]} -\sum_{s\neq t} M_{s}^\tr \tilde{Y}^{(t)}_{s} - M_{t}^\tr \tilde{c}_t(c_t,q'_t,\pred_t)\right\}
\end{align}
We remark that $x_t$ enters the above definition of $q_t^*(\brho)$, but we leave this dependence implicit until the end of the proof. For the purposes of the proof also define
\begin{align}
	\label{eq:step_away}
	q_t(\brho) = (1-d\gamma) \cdot q_t^*(\brho) + \gamma\mathbf{1},
\end{align}
a version of $q_t^*(\brho)$ that is shifted away from the boundary of the simplex (a step that allows for estimation of $c_t$). Also define $q_t = \En_\brho[q_t(\brho)]$ and $q^* = \En_{\brho} [q_t^*(\brho)]$. Observe that 
$$\En_{\pred_t\sim q_t} [c_t(\pred_t)] = q_t^\tr c_t \leq (q_t^*)^\tr c_t + \gamma\mathbf{1}^\tr c_t \leq \En_{\pred_t\sim q_t} [(q_t^*)^\tr \tilde{c}_t(c_t,q_t,\pred_t)] + d\gamma$$ 
Hence,
\begin{align}
	&\max_{c_t\in[0,1]^d} \En_{\pred_t\sim q_t} \big\{ c_t(\pred_t) + \Rel{}{I_1,\ldots,I_t}\big\} \notag\\
	&\leq \max_{c_t\in[0,1]^d} \En_{\pred_t\sim q_t} \big\{  (q_t^*)^\tr \tilde{c}_t(c_t,q_t,\pred_t) + \Rel{}{I_{1:t-1}, I_t(x_t,q_t,\pred_t,c_t)}\big\}  + d\gamma \notag\\
	&\le \sup_{\pred_t \in[d], q'_t}~ \max_{c_t\in[0,1]^d} \left\{  (q_t^*)^\tr  \tilde{c}_t(c_t,q'_t,\pred_t) + \Rel{}{I_{1:t-1}, I_t(x_t,q'_t,\pred_t,c_t)}\right\} +d\gamma. \label{eq:rel_t_1}
\end{align}
In the last expression, the supremum is over $q'_t$ of the form $ (1-d\gamma) \cdot q + \gamma\mathbf{1}$, $q\in\Delta_d$. This last upper bound holds because $q_t$ is one of such distributions. The importance of this upper bound is that it decouples the $q^*_t$ from $q'_t$ in the first term, a step that yields a simple optimization problem that defines $q_t^*(\brho)$. Writing out the form of the relaxation, the last expression is equal to
\begin{align}
	\sup_{\pred_t , q'_t}\max_{c_t} &\left\{ (q_t^*)^\tr \tilde{c}_t(c_t,q'_t,\pred_t) + \En_{\brho}\sup_{M\in{\cM}} -\sum_{s\neq t} M_{s}^\tr \tilde{Y}^{(t)}_{s} - M_{t}^\tr \tilde{c}_t(c_t,q'_t,\pred_t)\right\} + (n-t+1)d\gamma \notag \\
	&\leq \sup_{\tilde{c}_t \in \conv(\cD)}\left\{ (q_t^*)^\tr \tilde{c}_t + \En_{\brho}\sup_{M\in{\cM}} -\sum_{s\neq t} M_{s}^\tr \tilde{Y}^{(t)}_{s} - M_{t}^\tr \tilde{c}_t\right\} + (n-t+1)d\gamma \notag 
\end{align}
since $\tilde{c}_t(c_t,q_t',\pred_t)\in\conv(\cD)$. The expression inside the supremum is a convex function of $\tilde{c}_t$, and thus the supremum is achieved at a vertex, an element of $\cD$. Since $q_t^*=\En_\brho[q_t^*(\brho)]$, we upper bound the last expression via Jensen's inequality (omitting $(n-t+1)d\gamma$ to simplify the exposition) by
\begin{align}
	&\En_{\brho} \sup_{\tilde{c}_t\in \cD}\left\{ q_t^*(\brho)^\tr \tilde{c}_t + \sup_{M\in{\cM}} -\sum_{s\neq t} M_{s}^\tr \tilde{Y}^{(t)}_{s} - M_{t}^\tr \tilde{c}_t\right\} \label{eq:pl1}
\end{align}
Since $q_t^*(\brho)$ is precisely defined to be the minimizer (given $\brho$) of the supremum in \eqref{eq:pl1}, the preceding expression is equal to
\begin{align*}
&\En_{\brho} \inf_{q\in\Delta_d} \sup_{\tilde{c}_t\in \cD}\left\{ q^\tr \tilde{c}_t + \sup_{M\in{\cM}} -\sum_{s\neq t} M_{s}^\tr \tilde{Y}^{(t)}_{s} - M_{t}^\tr \tilde{c}_t\right\} 
\end{align*}
The rest of the upper bounds will be derived conditionally on $\brho$. Observe that 
\begin{align*}
		&\inf_{q\in\Delta_d} \sup_{\tilde{c}_t\in \cD}\left\{ q^\tr \tilde{c}_t + \sup_{M\in{\cM}} -\sum_{s\neq t} M_{s}^\tr \tilde{Y}^{(t)}_{s} - M_{t}^\tr \tilde{c}_t\right\} =\sup_{p_t}  \inf_{q} \En_{\tilde{c}_t \sim p_t}\left\{ q^\tr \tilde{c}_t + \sup_{M\in{\cM}} -\sum_{s\neq t} M_{s}^\tr \tilde{Y}^{(t)}_{s} - M_{t}^\tr \tilde{c}_t\right\} 
\end{align*}
by the minimax theorem, where $p_t$ ranges over the set of distributions on $\cD$. By linearity of expectation, the preceding expression is equal to 
\begin{align}
	&\sup_{p_t}  \inf_{q} \left\{ q^\tr \En_{\tilde{c}_t \sim p_t}[\tilde{c}_t] + \En_{\tilde{c}_t \sim p_t}\sup_{M\in{\cM}} -\sum_{s\neq t} M_{s}^\tr \tilde{Y}^{(t)}_{s} - M_{t}^\tr \tilde{c}_t\right\} \notag\\
	&=\sup_{p_t} \left\{ \min_{j\in[d]}\e_j^\tr \En_{\tilde{c}_t \sim p_t}[\tilde{c}_t] + \En_{\tilde{c}_t \sim p_t}\sup_{M\in{\cM}} -\sum_{s\neq t} M_{s}^\tr \tilde{Y}^{(t)}_{s} - M_{t}^\tr \tilde{c}_t\right\} \label{eq:intrm1}.
\end{align}
Observe that for any $M\in\cM$, $\sum_{j=1}^d M_{j,t} = 1$ and the elements of $M_{t}$ are nonnegative. Thus
\begin{align*}
	 \min_j \e_j^\tr \En_{\tilde{c}_t \sim p_t}[\tilde{c}_t] \leq M_{t}^\tr \En_{\tilde{c}_t \sim p_t}[\tilde{c}_t]
\end{align*}
Therefore, \eqref{eq:intrm1} is equal to
\begin{align*}
	&\sup_{p_t} \left\{ \En_{\tilde{c}_t \sim p_t}\sup_{M\in{\cM}} -\sum_{s\neq t} M_{s}^\tr \tilde{Y}^{(t)}_{s} + \min_{j\in[d]}\e_j^\tr \En_{\tilde{c}_t \sim p_t}[\tilde{c}_t] - M_{t}^\tr \tilde{c}_t\right\} \\
&\le \sup_{p_t} \left\{ \En_{\tilde{c}_t \sim p_t}\sup_{M\in{\cM}} -\sum_{s\neq t} M_{s}^\tr \tilde{Y}^{(t)}_{s} + M_{t}^\tr \En_{\tilde{c}_t \sim p_t}[\tilde{c}_t] - M_{t}^\tr \tilde{c}_t\right\} \\
&= \sup_{p_t} \left\{ \En_{\tilde{c}_t, \tilde{c}'_t\sim p_t}\sup_{M\in{\cM}} -\sum_{s\neq t} M_{s}^\tr \tilde{Y}^{(t)}_{s} + M_{t}^\tr (\tilde{c}'_t -  \tilde{c}_t)\right\} 
\end{align*}
Since exchanging $\tilde{c}_t$ and $\tilde{c}'_t$ switches the sign in the last term, we may introduce an independent Rademacher random variable $\delta_t$ via the standard technique of symmetrization. The last expression is then equal to
\begin{align*}
&\sup_{p_t} \left\{ \En_{\tilde{c}_t, \tilde{c}'_t\sim p_t}\En_{\delta_t}\sup_{M\in{\cM}} -\sum_{s\neq t} M_{s}^\tr \tilde{Y}^{(t)}_{s} + \delta_t M_{t}^\tr (\tilde{c}'_t -  \tilde{c}_t)\right\} \\
&\leq \sup_{p_t} \left\{ \En_{\tilde{c}_t\sim p_t}\En_{\delta_t}\sup_{M\in{\cM}} -\sum_{s\neq t} M_{s}^\tr \tilde{Y}^{(t)}_{s} + 2\delta_t M_{t}^\tr \tilde{c}_t \right\} 
\end{align*}
The above inequality follows by splitting the supremum into two parts equal parts. Let us now reason conditionally on $\tilde{c}_t$. There are two cases: either $\tilde{c}_t=\bf{0}$ or $\tilde{c}_t = \gamma^{-1}\e_{j}$ for some coordinate $j\in[d]$. Let us consider the second case, and the first follows from the same reasoning. Take $Z$ to be a random vector with independent coordinates and values in $\{-\gamma^{-1},\gamma^{-1}\}^d$. For the $j$th coordinate, $Z_j$ is identically $\gamma^{-1}$, while for all other coordinates $i\neq j$ the distribution $Z_i$ is symmetric. Clearly, $\En Z = \tilde{c}_t$. By Jensen's inequality,
\begin{align*}
	\En_{\delta_t}\sup_{M\in{\cM}} \left\{ -\sum_{s\neq t} M_{s}^\tr \tilde{Y}^{(t)}_{s} + 2\delta_t M_{t}^\tr \tilde{c}_t  \right\}
	&\leq \En_{\delta_t}\En_{Z}\sup_{M\in{\cM}} \left\{ -\sum_{s\neq t} M_{s}^\tr \tilde{Y}^{(t)}_{s} + 2\delta_t M_{t}^\tr Z  \right\}
\end{align*}
It is not hard to see that the distribution of $\delta_t Z$ is uniform on $\{-\gamma^{-1}, \gamma^{-1}\}^d$, and we can write it as $\gamma^{-1}\bepsilon_{t}$, a scaled vector of independent Rademacher random variables. The overall bound (together with the omitted term $(n-t +1)d\gamma$) is then
\begin{align*}
		\max_{c_t\in[0,1]^d} \En_{\pred_t\sim q_t} \big\{ c_t(\pred_t) + \Rel{}{I_1,\ldots,I_t}\big\} 
		&\leq \En_{\brho} \sup_{p_t} \left\{ \En_{\tilde{c}_t\sim p_t}\En_{\bepsilon_{t}}\sup_{M\in{\cM}} -\sum_{s\neq t} M_{s}^\tr \tilde{Y}^{(t)}_{s} + 2\gamma^{-1} M_{t}^\tr \bepsilon_t \right\} + (n-t +1)d\gamma\\
	&=\En_{\brho} \En_{\bepsilon_{t}}\sup_{M\in{\cM}} \left\{ -\sum_{s\neq t} M_{s}^\tr \tilde{Y}^{(t)}_{s}  + 2\gamma^{-1} M_{t}^\tr \bepsilon_{t} \right\} + (n-t +1)d\gamma
\end{align*}
since the expression no longer depends on $p_t$ and $\tilde{c_t}$. The above inequality holds for any $x_t$. Hence, we may take expectation on both sides, yielding
\begin{align*}
	\En_{x_t}\max_{c_t\in[0,1]^d} \En_{\pred_t\sim q_t} \big\{ c_t(\pred_t) + \Rel{}{I_1,\ldots,I_t}\big\} 
	&\leq \En_{\bepsilon_{t:n},x_{t:n}} \sup_{M\in{\cM}[x_{1:n}]} \left\{ -\sum_{s\neq t} M_{s}^\tr \tilde{Y}^{(t)}_{s}  + 2\gamma^{-1} M_{t}^\tr \bepsilon_{t} \right\} + (n-t +1)d\gamma \\
	&=\Rel{}{I_{1:t-1}}
\end{align*}
because $\brho=(\bepsilon_{t+1:n},x_{t+1:n})$. This proves admissibility.

\paragraph{Omitting $\bf{0}$ from objective} Examining the algorithm in \eqref{eq:def_q_opt}, we note that the optimization problem may be taken over $\tilde{c}\in\{\e_1,\ldots,\e_d\}$; that is, the $\argmin{}$ over $q$ does not change upon the removal of $\bf{0}$. To see this, suppose that $q_t^*(\brho)$ is the optimal response when $\tilde{c}\in\{\e_1,\ldots,\e_d\}$. Then it is also an optimal response to $\tilde{c}\in\{\e_1,\ldots,\e_d\}\cup\{\bf{0}\}$ since for $\tilde{c}=\bf{0}$ the value of $q$ does not make any difference in terms of the value. This proves our claim, and is reflected in the definition of Algorithm~\ref{alg}.

\paragraph{Regret bound}
The final bound is given by 
$$\Rel{}{\emptyset} = \En_{x}\En_\epsilon \sup_{M\in{\cM}[x_{1:n}]}- \sum_{t=1}^n M_t^\tr \tilde{Y}^{(0)}_t + nd\gamma = \frac{2}{\gamma}\En\Rad(\F; x_{1:n}) + nd\gamma = 2\sqrt{2dn \En\Rad(\F; x_{1:n})}$$

\end{document}